\theoremstyle{plain}
\newtheorem{theorem}{Theorem}[section]
\newtheorem{proposition}[theorem]{Proposition}
\newtheorem{lemma}[theorem]{Lemma}
\theoremstyle{definition}
\newtheorem{assumption}[theorem]{Assumption}
\theoremstyle{remark}
\icmltitlerunning{Provably Learning Attention with Queries}
\definecolor{mydarkblue}{rgb}{0,0.08,0.45}
\definecolor{burntorange}{rgb}{0.8,0.33,0.0}
\definecolor{sbblue}{HTML}{00216e}
\def\eqref#1{(\ref{#1})}
\def\1{\bm{1}}
\def\vzero{{\bm{0}}}
\def\va{{\bm{a}}}
\def\vb{{\bm{b}}}
\def\ve{{\bm{e}}}
\def\vt{{\bm{t}}}
\def\vu{{\bm{u}}}
\def\vv{{\bm{v}}}
\def\vw{{\bm{w}}}
\def\vx{{\bm{x}}}
\def\vz{{\bm{z}}}
\def\mA{{\bm{A}}}
\def\mB{{\bm{B}}}
\def\mE{{\bm{E}}}
\def\mK{{\bm{K}}}
\def\mP{{\bm{P}}}
\def\mQ{{\bm{Q}}}
\def\mV{{\bm{V}}}
\def\mW{{\bm{W}}}
\def\mX{{\bm{X}}}
\def\mZ{{\bm{Z}}}
\DeclareMathAlphabet{\mathsfit}{\encodingdefault}{\sfdefault}{m}{sl}
\SetMathAlphabet{\mathsfit}{bold}{\encodingdefault}{\sfdefault}{bx}{n}
\newcommand{\R}{\mathbb{R}}
\newcommand{\bt}{\bar{t}}
\DeclareMathOperator{\sign}{sign}
\newcommand{\rank}{\operatorname{rank}}
\newcommand{\calA}{\mathcal{A}}
\newcommand{\calX}{\mathcal{X}}
\newcommand{\calN}{\mathcal{N}}
\newcommand{\calI}{\mathcal{I}}
\newcommand{\fatt}{\textsc{Att}}
\newcommand{\amq}{f_{\mW^{\star},\vv^{\star}}}
\newcommand{\mq}{\operatorname{VQ}}
\newcommand{\VQ}{\operatorname{VQ}}
\newcommand{\MQ}{\operatorname{MQ}}
\newcommand{\AVQ}{\operatorname{AVQ}}
\newcommand{\sha}{\operatorname{SHA}}
\newcommand{\ffn}{\operatorname{FFN}}
\newcommand{\relu}{\operatorname{ReLU}}
\newcommand{\TF}{\operatorname{TF}}
\newcommand{\Affn}{\mathcal{A}_{\text{FFN}}}
\newcommand{\be}{\mathbf{e}}
\newcommand{\tbe}{\tilde{\be}}
\newcommand{\bu}{\mathbf{u}}
\newcommand{\bw}{\mathbf{w}}    
\DeclareMathOperator{\softmax}{softmax}
\newcommand{\halpha}{\widehat{\alpha}}
\newcommand{\vhat}{\widehat{\vv}}
\newcommand{\What}{\widehat{\mW}}
\newcommand{\vstar}{\vv^{\star}}
\newcommand{\fhat}{\widehat{f}}
\newcommand{\balpha}{\boldsymbol{\alpha}}
\newcommand{\mWh}{\mW^{(h)}}
\newcommand{\vvh}{\vv^{(h)}}
\newcommand{\mVh}{\mV^{(h)}}
\DeclareMathOperator{\tr}{\mathsf{tr}}
\def\eps{{\epsilon}}
\begin{document}

\twocolumn[
  \icmltitle{Provably Learning Attention with Queries}



  \icmlsetsymbol{equal}{*}

  \begin{icmlauthorlist}
    \icmlauthor{Satwik Bhattamishra}{yyy}
    \icmlauthor{Kulin Shah}{comp}
    \icmlauthor{Michael Hahn}{sch}
    \icmlauthor{Varun Kanade}{yyy}
  \end{icmlauthorlist}

  \icmlaffiliation{yyy}{University of Oxford}
  \icmlaffiliation{comp}{UT Austin}
  \icmlaffiliation{sch}{Saarland University}

  \icmlcorrespondingauthor{Satwik Bhattamishra}{satwik.bmishra@cs.ox.ac.uk or satwik55@gmail.com}

  \icmlkeywords{Machine Learning, ICML}

  \vskip 0.3in
]



\printAffiliationsAndNotice{}  


\begin{abstract}
We study the problem of learning Transformer-based sequence models with black-box access to their outputs. In this setting, a learner may adaptively query the oracle with any sequence of vectors and observe the output of the target function. We begin with studying the learnability of the simplest formulation, that is, learning a single-head attention-based regressor with queries. We show that for a model with width $d$, there is an elementary algorithm to learn the parameters of single-head attention with $O(d^2)$ queries. Further, we show that if there exists an algorithm to learn ReLU feedforward networks (FFNs), then the single-head algorithm can be easily adapted to learn one-layer Transformers with single-head attention. Next, we show that, in the common regime where the head dimension $r \ll d$, single-head attention-based models can be learned with $O(rd)$ queries via compressed sensing arguments. We also study robustness to noisy oracle access, proving that under mild norm and margin conditions, the parameters can be estimated to $\varepsilon$ accuracy with a polynomial number of queries even when outputs are only provided up to additive tolerance. Finally, we consider the learnability of multi-head attention and show that they are not identifiable from queries, and hence, learnability in the same sense is not feasible without additional assumptions. We discuss potential approaches to learn multi-head attention-based models under certain structural assumptions.
\end{abstract}

\section{Introduction}\label{sec:intro}



Transformer-based models~\citep{vaswani2017attention} are being widely deployed in the industry. Given their ubiquity, a natural question is the following: given black-box access to the outputs of a target model (e.g., via an API), can an adversary recover the weights of the target model? Such questions regarding model stealing have been widely studied empirically and theoretically for feedforward and related networks~\citep{tramer2016stealing, shi2017steal, jagielski2020high, orekondy2019knockoff}. 

A natural formalisation of this problem is the problem of \textit{learning with queries}~\citep{bshouty2013exact, angluin1988queries} where a learner adaptively chooses inputs and observes labels (or real-valued outputs) with the goal of reconstructing the target function or, more stringently, the target parameters. Beyond security-driven model extraction, parameter recovery is also a natural lens for understanding what aspects of a model are identifiable from behaviour alone. A strand of work has focused on providing provable guarantees for recovering ReLU networks under value or membership queries, often under additional structural assumptions~\citep{chen2021efficiently,milli2019model,daniely2023an}.  However, parameter recovery guarantees for softmax-attention from access to queries have been largely underexplored.


\textbf{Problem.} We study parameter recovery for attention-based sequence models with \emph{value-query} access. Concretely, we begin with a single-head attention-based regressor which can be formulated in the following way (see Sec.~\ref{sec:def} for details). For an input sequence $X \in \R^{N \times d}$ (with variable length $N \ge 1$) and parameters $\mW \in \R^{d \times d}$ and $\vv \in \R^d$, the model computes $f_{\mW,\vv}(X) \in \R$ where
\[
f_{\mW,\vv}(X)
=
\softmax\!\big(\vx_1^\top \mW \vx_N,\ldots,\vx_N^\top \mW \vx_N\big)^\top (X\vv).
\]
The last token plays the role of the query token in the attention mechanism and the softmax produces attention weights over positions. A learner receives black-box access to $\VQ(X)=f_{\mW^\star,\vv^\star}(X)$ and can query \emph{any} sequence $X$; the goal is to recover $(\mW^\star,\vv^\star)$ exactly or approximately.

\textbf{Contributions.} We initiate a systematic study of learning softmax attention with value queries and provide the first, to our knowledge, provable guarantees for parameter recovery. We analyse learnability in different query models and regimes, and establish the following results.

\textbf{(i) Learning single-head attention.} We give an elementary, polynomial-time algorithm that exactly recovers $(\mW^\star,\vv^\star)$ for the single-head attention regressor using $O(d^2)$ value queries (Theorem~\ref{thm:attmq}). The key observation is that we can use short sequence vectors to isolate the softmax nonlinearity. Using queries of length two reduces softmax to a sigmoid, which can be inverted, turning each oracle response into linear equations that can be solved to obtain the target parameters. Further, we show that the single-head algorithm can be lifted to learn a one-layer, single-head Transformer that composes attention with a two-layer ReLU MLP: assuming an algorithm exists for learning the corresponding ReLU FFN from value queries, we show it can be combined with our attention-recovery method to obtain a value-query learner for one-layer Transformer (Section~\ref{sec:transformer}).

\textbf{(ii) Faster recovery in low-rank regime.} Motivated by the common setting where the head dimension $r \ll d$, we analyse the case $\rank(\mW^\star)\le r$. We give a randomized algorithm that recovers $(\mW^\star,\vv^\star)$ using $O(rd)$ queries (Theorem~\ref{thm:mq-lowrank}). The main idea is to design probes that implement rank-one linear measurements of $\mW^\star$, reducing recovery to a standard low-rank matrix sensing problem and enabling reconstruction via compressed sensing tools.


\textbf{(iii) Robustness and Stability.} Our exact recovery algorithm relies on inverting a sigmoid, which is not globally Lipschitz. We therefore study a more realistic oracle that returns values within additive tolerance $\tau$. Under mild norm bounds and a simple margin condition, we show that one can recover $(\mW^\star,\vv^\star)$ to $\epsilon$ accuracy in polynomial time with $O(d^2)$ queries and with tolerances scaling polynomially in $d$ and the target accuracy (Section~\ref{sec:mem-noise}). The proof constructs probes that keep the recovered attention weights in a stable range where the inverse-sigmoid behaves well. 

We also extend our approach to the weaker membership-query setting, where the oracle returns only a binary label instead of real-valued scalars. Since direct sigmoid inversion is no longer available, we combine the stability-based probe design with a bisection method to approximately recover $(\mW^\star,\vv^\star)$ under slightly stronger assumptions (App.~\ref{app:membership}).


Lastly, we show that multi-head attention parameters are \emph{not identifiable} from value queries in general: distinct parameter settings can induce exactly the same input--output map (Proposition~\ref{prop:non-uniq-multi-head}). Consequently, guarantees analogous to the single-head case cannot hold without additional assumptions. We discuss some structural conditions that restore identifiability and outline possible query-based approaches in this regime (App.~\ref{app:multi-head}).

At a high level, our first main result shows that single-head softmax attention admits surprisingly straightforward parameter recovery. This stands in contrast to learning with random examples (without queries), where the problem is non-convex for standard loss functions and is quite challenging. Further, the analogous problem of learning a one-hidden-layer ReLU MLP with queries, which is similar in surface form, has also been substantially more challenging~\citep{chen2021efficiently}.

\section{Related Work}\label{sec:relwork}

\textbf{Learning Neural Nets with Queries.}
Learning with membership/value queries is a classical topic in computational learning theory~\citep{angluin1988queries,angluin1993learning}, and it is also a natural formalisation of black-box \emph{model extraction} where an adversary adaptively probes a prediction API. Early work investigated whether a feedforward network can be reconstructed from oracle access to its input--output map~\citep{fefferman1993recovering}. More recently, empirical methods have been proposed for extraction attacks and have been studied extensively in the context of security and ML~\citep{tramer2016stealing,orekondy2019knockoff,jagielski2020high}. On the theory side, there are now polynomial-time guarantees for learning shallow ReLU networks under different query models and structural assumptions, including value-query learning under Gaussian inputs~\citep{chen2021efficiently}, linear independence of parameter vectors~\citep{milli2019model}, and exact parameter extraction under general-position conditions~\citep{daniely2023an}. In contrast, despite the central role of attention in modern sequence models, we are not aware of prior work giving provable learnability/parameter estimation guarantees for softmax attention (or Transformers built from it) from value queries alone; our results initiate this study for attention blocks. Complementary to our work, \citet{carlini2024stealing} tackle the problem of extracting specific parts of a production-level language model, where they recover limited information such as width and embedding matrices, but in a more challenging and realistic setting.

\textbf{Learning Single-layer Attention.}
There is a growing body of work that studies the passive learnability of a single-layer attention-based model from examples under various assumptions \citep[e.g.][]{chen2025provably,tian2023scan,magen2024benign}. For instance, \citet{wang2024transformers} analyse the learnability of sparse token selection with a single-head attention model and prove separations from fully-connected nets. On the optimisation side, \citet{arnaboldi2025asymptotics} study training dynamics for simplified attention-style models, and recent high-dimensional/statistical-mechanics analyses quantify when softmax attention detects weak sparse signals and how it compares to linear alternatives \citep{marion2025attention,barnfield2025high}. Complementarily, several works characterise implicit bias and geometry in attention training, including max-margin/token-selection viewpoints and connections to SVMs \citep{ataee2023max,tarzanagh2023transformers,sheen2024implicit}, and \citet{huang_icl24} study learning dynamics for in-context learning with one-layer softmax Transformers. Finally, when softmax is removed (linear attention), \citet{yau2024learning} show PAC learnability via a reduction to kernel methods. These works focus on passive learning and optimisation from random examples, whereas our work studies efficient learnability and parameter recovery with queries.

\section{Definitions}\label{sec:def}

Each input example consists of a matrix–label pair \((X,y)\) where  
\(X\in\R^{N\times d}\) contains \(N\) row-vectors  
\(
  X
  =
  \begin{pmatrix}
    \vx_1^{\top}\\[-3pt] \vdots\\[-3pt] \vx_N^{\top}
  \end{pmatrix},
  \quad
  \vx_i\in\R^{d},
\)
and the target label is a scalar \(y\in\R\).

\textbf{Model class} $\fatt$.
Fix an integer \(d\ge 1\) and the model can take inputs of any length $N \geq 1$. We focus on a single attention-based regressor that takes a sequence of vectors as input and produces a scalar as output. Let $\mQ, \mK, \mV \in \R^{r \times d}$ be the query, key, and value projection matrices. Let $\vw_o \in \R^r$ be the output projection vector. For any $N$, the output of a single attention-based model $\sha: \R^{N \times d} \to \R$ is computed by $\sha(X)=$
\[
    \softmax(\vx_1^{\top} \mK^{\top}\mQ\vx_N, \ldots, \vx_N^{\top} \mK^{\top}\mQ\vx_N)^{\top}(X\mV^{\top} \vw_o)
\]
 The model can be viewed in a simpler way in which the query and key projections are merged into one matrix, i.e., $\vx_i \mK^{\top}\mQ\vx_N =\vx_i \mW\vx_N$ for some matrix $\mW \in \R^{d \times d}$. Additionally, the value projection and output vector can be merged: $X\mV^{\top}\vw_o = X\vv$ for some vector $\vv \in \R^d$. More precisely, an equivalent definition is as follows. For parameters \(\mW \in \R^{d \times d}\) and \(\vv\in\R^{d}\) define the score vector  $s_i(X,\mW) = \vx_i^{\top}\mW\vx_N,$ for $i=1,\dots,N$,
and the (row) attention weights  
\[
  \boldsymbol{\alpha}(X,\mW)
   = 
  \softmax\!\bigl(s_1,\dots,s_N\bigr)
  \in\Delta^{N-1},
\]
where \(\softmax(z)_i=\exp(z_i)/\sum_{j}\exp(z_j)\) and
\(\Delta^{N-1}\) is the probability simplex.
The attention model associated with \((\mW,\vv)\) is the map
\[
  f_{\mW,\vv}(X)
   = 
  \boldsymbol{\alpha}(X,\mW)^{\top}\,(X\vv)
   \in\R.
\]
With all such functions, we have the hypothesis class
\[
  \fatt_{d}
   = 
  \bigl\{\,f_{\mW,\vv} \bigl| \mW \in \R^{d \times d} \text{ and }\vv\in\R^{d}\bigr\}.
\]
The class $\fatt = \cup_{d\geq 1} \fatt_{d}$. The single-head attention-based model considered here is essentially a one-layer Transformer with a linear network instead of a ReLU MLP.

\textbf{Learning objective.} Throughout, we primarily focus on parameter estimation from value or membership queries. By \textit{learning}, we will mean exact or approximate identification of the target parameters. Given black-box access to the target function, the goal of the learner is to identify the target either exactly or approximately by querying the target function. For instance, for a target function $f^\star$ parameterised by $(\mW^\star, \vv^\star)$, the learner is given access to a value query oracle $\VQ(X) = f^\star(X)$ which returns the true output exactly (or in some cases approximately as in Sec.~\ref{sec:mem-noise}). The learner can query any input $X \in \calX$ to the value query oracle and the learner aims to recover \((\widehat{\mW},\widehat{\vv})\) such that \((\widehat{\mW},\widehat{\vv}) = (\mW^\star, \vv^\star) \), or for some $\epsilon \in \R$,
\[
    \| \mW^\star - \widehat{\mW} \|_F \leq \epsilon \quad \text{and} \quad \| \vv^\star - \widehat{\vv} \|_2 \leq \epsilon.
\]
A class of functions is efficiently learnable if the number of queries used and the runtime of the learning algorithm are polynomial in the parameters of the concept class (such as $d$ for the class $\fatt$).



\section{Learning Attention with Queries}

We will analyse whether an unknown single-head attention-based regressor can be learned exactly using value queries only. The learner has access to the value query oracle $\amq: \R^{N \times d} \to \R$ such that for any $X \in \R^{N \times d}$, it can query the label $\amq(X) \in \R$. We describe a value query (VQ) method that exactly identifies the target parameters $(\mW^\star,\vv^\star)$ whenever $\vv^\star\neq \vzero$. Throughout, let $\be_i\in\R^d$ denote the $i$th standard basis vector, and let $\sigma(t)=1/(1+e^{-t})$ denote the sigmoid function.

\begin{theorem}\label{thm:attmq}
    Assuming $\vv^\star \neq \vzero$, given access to a value query oracle $\mq(X) = \amq(X)$, the parameters $\mW^\star$ and $\vv^\star$ are exactly learnable in polynomial-time with $O(d^2)$ value queries.
\end{theorem}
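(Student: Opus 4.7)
The plan is to exploit the fact that sequences of length one and two collapse the softmax to trivial forms that can be inverted analytically, turning each value query into a linear equation in $(\mW^\star,\vv^\star)$.

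To recover $\vv^\star$, I would use length-one queries: for $X=\vx^\top$ the softmax is trivially $1$ and the oracle returns $\amq(X)=\vx^\top\vv^\star$, so probing $\vx=\be_1,\dots,\be_d$ reads off the coordinates of $\vv^\star$ in $d$ queries.

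For $\mW^\star$, I would pass to length-two inputs. Writing the two rows as $(\vu,\vw)$ with $\vx_N=\vw$, the first-row attention weight is $\alpha=\sigma((\vu-\vw)^\top\mW^\star\vw)$ and the oracle value equals $\amq(X)=\vw^\top\vv^\star+\alpha(\vu-\vw)^\top\vv^\star$. Since $\vv^\star$ is already known, whenever $(\vu-\vw)^\top\vv^\star\neq 0$ we can solve for $\alpha\in(0,1)$ and apply $\sigma^{-1}$ to obtain the scalar linear measurement $(\vu-\vw)^\top\mW^\star\vw$ of $\mW^\star$. To recover the whole matrix I would pick $d$ linearly independent probes $\vw_1,\ldots,\vw_d$ (for instance a small generic perturbation of the standard basis) satisfying $\vw_\ell^\top\vv^\star\neq 0$ and $v_i^\star\neq\vw_\ell^\top\vv^\star$ for every $i,\ell$, and for each $\vw_\ell$ issue $d+1$ length-two queries with first row $\vu\in\{\vzero,\be_1,\dots,\be_d\}$. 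The $\vu=\vzero$ query yields $-\vw_\ell^\top(\mW^\star\vw_\ell)$ and the $\vu=\be_i$ query yields $(\mW^\star\vw_\ell)_i-\vw_\ell^\top(\mW^\star\vw_\ell)$; subtracting recovers the vector $\vy_\ell:=\mW^\star\vw_\ell$ coordinate by coordinate. Stacking the $d$ identities and inverting the $d\times d$ probe matrix $[\vw_1\cdots\vw_d]$ then yields $\mW^\star$. The total query budget is $d+d(d+1)=O(d^2)$, and the runtime is polynomial in $d$.

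The only place the hypothesis $\vv^\star\neq\vzero$ is used, and what I expect to be the main bookkeeping obstacle, is maintaining the non-degeneracy conditions $(\vu-\vw_\ell)^\top\vv^\star\neq 0$ needed to solve uniquely for $\alpha$, and confirming $\alpha\in(0,1)$ strictly so that $\sigma^{-1}$ is well defined. Both are immediate: length-two softmax never saturates at $0$ or $1$ for finite scores, and the $\vw_\ell$ can always be chosen to avoid the finite union of hyperplanes determined by the known vector $\vv^\star$ whenever $\vv^\star\neq\vzero$.
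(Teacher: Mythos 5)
Your proposal is correct and follows essentially the same strategy as the paper: length-one basis queries read off $\vv^\star$, and length-two queries collapse the softmax to a sigmoid whose inversion turns each oracle response into a linear measurement of $\mW^\star$, recoverable by solving a $d\times d$ linear system per probe direction. The only difference is in probe design: the paper sets the rows to $(\vu+\be_j,\ \be_j)$ so the score difference is directly $\vu^\top\mW^\star\be_j$ and columns are recovered with $d$ queries each, whereas you cancel the quadratic term $\vw_\ell^\top\mW^\star\vw_\ell$ via an extra $\vu=\vzero$ baseline query and an additional non-degeneracy condition $\vv_i^\star\neq\vw_\ell^\top\vv^\star$ --- both yield $O(d^2)$ queries and polynomial time.
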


\begin{proof}
The algorithm has two phases. The first phase is trivial, and recovers $\vv^\star$ using one-row or length-one queries. The second phase recovers $\mW^\star$ column–by–column using two-row or length-two queries.

\noindent\emph{(i) Identifying $\vv^\star$.}
With $N=1$ the softmax weight is $1$, so $f_{\mW,\vv}([\vx^\top])=\vx^\top\vv$ and is independent of $\mW$.
Query the $d$ one–row inputs $X=[\be_i^\top]$ for $i=1,\dots,d$ to obtain $y= \vv_i^\star$. Hence, the vector $\vv^\star$ is recovered with $d$ VQs.

\noindent\emph{(ii) Identifying $\mW^\star$ (column–wise recovery).}
Fix $j\in\{1,\dots,d\}$ and let $
\bw_j  :=  \mW^\star \be_j \in \R^d $ and $(\bw_j)_i = \mW^\star_{ij}$. For any probe vector $ \vu\in\R^d$, consider the two–row input 
\[
X  =  \begin{bmatrix}  (\vu + \be_j)^\top \\ \be_j^\top \end{bmatrix}.
\]
The attention scores are
\[
s_1  =   (\vu + \be_j)^\top \mW^\star \be_j  =   (\vu + \be_j)^\top\bw_j,
\]
\[
s_2  =  \be_j^\top \mW^\star \be_j  =  (\bw_j)_j.
\]
So the attention weight on the first row or position is
\begin{equation}\label{eq:sha_one}
\alpha( \vu;j)  =  \frac{e^{s_1}}{e^{s_1}+e^{s_2}}
 =  \sigma(s_1-s_2)
 =  \sigma\!\bigl(\vu^\top\bw_j\bigr),
\end{equation}

where $\sigma(t)=1/(1+e^{-t})$.
Since $(X\vv^\star)=\bigl[ \vu^\top\vv^\star + \vv^\star_j,  \vv_j^\star\bigr]^\top$, the label returned by the oracle is the convex combination
\[
y  =  \alpha( \vu;j)\,\bigl( (\vu + \be_j)^\top \vv^\star\bigr) + \bigl(1-\alpha( \vu;j)\bigr)\, \vv_j^\star
\]
\[
 =   \vv_j^\star + \alpha( \vu;j)\, ( \vu^\top \vv^\star).
\]
If $ \vu^\top \vv^\star \neq  \vzero$, we can recover the attention weight from the observed $y$ via
\[
\alpha( \vu;j)
 = 
\frac{\,y -  \vv_j^\star\,}{\, \vu^\top \vv^\star}
 \in  (0,1).
\]
Thus, combining with \eqref{eq:sha_one}, we have,
\[
    \vu^\top\bw_j= \sigma^{-1} \left ( \frac{\,y -  \vv_j^\star\,}{\, \vu^\top \vv^\star \,} \right )  
\]
Thus, each probe vector $ \vu$ yields a \emph{linear} equation in the unknown column $\bw_j$. This leads to one linear equation for $d$ variables in $\bw_j$. We can use $d$ linearly independent vectors to identify $\bw_j$ by solving a system of linear equations. 

Let $\vu_1, \ldots, \vu_d$ be $d$ linearly independent vectors such that $\vu_\ell^\top, \vv^\star \neq \vzero$ for all $\ell \in [d]$. Take invertible $\mZ = [\vu^\top_1; \ldots; \vu^\top_d] \in \R^{d \times d}$ and collect $d$ measurements  $\vt =[t_1, \ldots, t_d]$ with $\mZ \bw_j = \vt$. One can recover the $j$th column of $\mW^\star$ with $\bw_j = \mZ^{-1}\vt$ and repeating this for $j= 1, \ldots, d$ recover $\mW^\star$. 

One could sample $d$ vectors $\vu_1, \ldots, \vu_d$ i.i.d. from $\calN(0, I_d)$ and they will be linearly independent and the inner product $\vu_\ell^{\top} \vv^\star \neq 0$ for all $\ell$ almost surely (with probability 1). Another intuitive approach to understand how the parameters are recovered is the following. Pick any index $p$ with $\vv^\star_p \neq 0$. For $\ell = 1, \ldots, d$, set the probe vector $\vu_\ell$ in the following way,
\[
\vu_\ell  = 
\begin{cases}
\be_\ell, & \text{if }  \vv^\star_\ell \neq 0,\\[3pt]
\be_\ell+\be_p, & \text{if }  \vv^\star_\ell = 0.
\end{cases}
\]
For any column vector $\bw_j$, it is easy to see that the measurements $t_\ell = \vu_\ell^\top \bw_j = \mW^\star_{\ell j}$ when $\vv^\star_\ell \neq 0$ (e.g. for $\ell = p$) and $t_\ell = \mW^\star_{\ell j} + \mW^\star_{pj}$ when $\vv^\star_\ell = 0$.

The first phase uses $d$ queries to recover $\vv^\star$ and the second phase uses $d$ VQs per column, totalling $d^2$ additional queries. Hence, the entire procedure uses $O(d^2)$ VQs and runs in polynomial time. If $\vv^\star=\vzero$, then $f_{\mW^\star,\vv^\star}\equiv 0$ for all $X$ and $\mW^\star$ is not identifiable from VQs (many $\mW$ induce the same function). Conversely, when $\vv^\star\neq \vzero$, the column–wise linear system above uniquely determines $(\mW^\star,\vv^\star)$. 

\end{proof}

\textbf{Discussion.} Our result indicates that the problem of learning a single-head attention model with value queries is relatively straightforward. It is worth understanding why this is somewhat surprising. First, in contrast, proving comparable guarantees from i.i.d.\ random examples (i.e., without queries) for attention-style models typically requires nontrivial assumptions on the data model, task, or training dynamics, and substantial effort has gone into understanding learnability and establishing such results even in restricted settings~\citep{wang2024transformers,tian2023scan, arnaboldi2025asymptotics, marion2025attention, magen2024benign}. Similar to the problem of learning a single-head attention model, the problem of learning a one-hidden-layer ReLU MLP also has a single nonlinearity together with a parameter matrix and a vector. However, this analogous problem appears substantially more challenging: under minimal assumptions, existing efficient guarantees are generally distribution-dependent and focus on approximate function recovery (e.g., under Gaussian marginals), while exact recovery typically requires additional structural assumptions; see \citet{chen2021efficiently} for further discussion. Lastly, it is worth noting that many elementary concept classes, such as conjunctions or singletons, are not efficiently learnable with access to queries alone.

\subsection{Learning 1-Layer Transformers}\label{sec:transformer}

In this section, we show how to learn a one-layer Transformer with single-head attention, assuming a value-query learner for two-layer ReLU MLPs exists. In particular, given an algorithm $\Affn$ that learns two-layer ReLU MLPs from value queries, we give a method that combines $\Affn$ with our algorithm for single-head attention to learn a one-layer, single-head Transformer. 


\textbf{Model and Setup.}
A one-layer Transformer with single-head attention is the composition of a ReLU MLP and a single-head attention block. The attention block produces a single vector (here we attend using the last token as the query), and the FFN maps $\R^d\to\R$.
Concretely, let $\mQ,\mK\in\R^{r\times d}$ and $\mV\in\R^{d\times d}$ denote the query, key, and value projections, and let the FFN be a two-layer ReLU network
$\ffn(z)=\vw_2^\top \relu(\mW_1 z)$ with $\mW_1\in\R^{m\times d}$ and $\vw_2\in\R^{m}$. We assume the FFN has no bias terms. 
Then for $X\in\R^{N\times d}$ the model can be written as $\TF(X) = $
\[
  \vw_2^\top \relu\!\Bigl(\mW_1 \cdot \bigl[\softmax(\vx_i^\top \mK^\top \mQ \vx_N)^\top (X\mV^\top)\bigr]\Bigr).
\]
As in Section~\ref{sec:def}, we merge query and key parameters by setting $\mW:=\mK^\top\mQ\in\R^{d\times d}$, so the scores are $s_i(X,\mW)=\vx_i^\top \mW \vx_N$ and $\balpha(X,\mW)=\softmax(s_1,\dots,s_N)$.
We also merge the value map and the first FFN layer by defining $\mA:=\mV^\top \mW_1^\top\in\R^{d\times m}$, and rename the second-layer vector $\vw_o:=\vw_2$.
With these reparameterizations,
\[
  \TF(X)=\vw_o^\top \relu\!\bigl(\balpha(X,\mW)^\top X \mA\bigr),
\]
which is the same architecture with the three parameters $(\mW,\mA,\vw_o)$. We assume access to a value-query oracle $\VQ(X)=\TF_{\mW^\star,\mA^\star,\vw_o^\star}(X)$.

\textbf{Using an FFN learner.}
We will reduce learning $\TF$ to (i) learning a two-layer ReLU network from black-box queries and (ii) learning single-head attention as in Theorem~\ref{thm:attmq}.
Formally, assume there exists an algorithm $\Affn$ that, given oracle access to a function of the form
$x\mapsto \vw_o^{\star\top}\relu(x^\top \mA^\star)$, outputs parameters $(\widehat{\mA},\widehat{\vw}_o)$ computing the \emph{same} function on all $x\in\R^d$. To invoke $\Affn$, we use one-row queries. For $X=[x^\top]$ we have the scalar identity $\VQ([x^\top])=\vw_o^{\star\top}\relu(x^\top \mA^\star)$.
Therefore, restricting the Transformer oracle to length-$1$ inputs results in exactly the FFN target needed by $\Affn$, and we can recover an equivalent pair $(\widehat{\mA},\widehat{\vw}_o)$.

\textbf{Recovering parameters of attention.}
We next show that $\mW^\star$ can be recovered independently of $\Affn$ by eliminating the ReLU using two queries.
Define $\widetilde{\VQ}(X):=\VQ(X)-\VQ(-X)$.
For softmax attention, negating all tokens does not change the scores because $s_i(-X,\mW)=(-\vx_i)^\top \mW (-\vx_N)=s_i(X,\mW)$, and thus $\balpha(-X,\mW)=\balpha(X,\mW)$.
Consequently,
\[
  \balpha(-X,\mW)^\top (-X)\mA
  \;=\;
  -\,\balpha(X,\mW)^\top X\mA.
\]
Using $\relu(u)-\relu(-u)=u$ coordinate-wise, we obtain
\[
  \widetilde{\VQ}(X)
  \;=\;
  \vw_o^{\star\top}\Bigl(\relu(z)-\relu(-z)\Bigr)
  \;=\;
  \vw_o^{\star\top} z
\]
\[
  =
  \balpha(X,\mW^\star)^\top X\,\vv^\star,
  \qquad
  z=\balpha(X,\mW^\star)^\top X\mA^\star,
\]
where $\vv^\star:=\mA^\star \vw_o^\star\in\R^d$.
Thus $\widetilde{\VQ}$ is \emph{exactly} a value-query oracle for the single-head attention regressor $f_{\mW^\star,\vv^\star}(X)=\balpha(X,\mW^\star)^\top(X\vv^\star)$ considered in Section~\ref{sec:transformer}.
Whenever $\vv^\star\neq\vzero$, Theorem~\ref{thm:attmq} applied to $\widetilde{\VQ}$ recovers $\mW^\star$ (and $\vv^\star$) exactly; each query to $\widetilde{\VQ}$ uses two regular value queries.

Together, the two steps lead to a learning algorithm for one-layer single-head Transformers: use length-$1$ queries and $\Affn$ to recover an equivalent FFN $(\widehat{\mA},\widehat{\vw}_o)$, and use antisymmetric queries $\widetilde{\VQ}$ together with Theorem~\ref{thm:attmq} to recover $\widehat{\mW}=\mW^\star$. If $\vv^\star=\mA^\star\vw_o^\star=\vzero$, then $\widetilde{\VQ}\equiv 0$ and $\mW^\star$ is not identifiable from such queries. The overall query complexity is $Q_{\ffn}(d,m)+O(d^2)$ up to constant factors, where $Q_{\ffn}(d,m)$ is the number of queries used by $\Affn$.

\textbf{Discussion.} Several prior works have studied the learnability of ReLU FFNs with queries. \citet{milli2019model} show that when the columns of $\mA$ are linearly independent, then a ReLU network is efficiently learnable with queries and their algorithm can act as $\Affn$ to learn one-layer Transformers under the somewhat strong linear independence assumption. While learning 2-layer ReLU MLPs without any structural assumptions is quite difficult~\citep{chen2021efficiently}, \citet{daniely2023an} provide an efficient algorithm for learning ReLU FFNs with relatively much milder assumptions. Their algorithm is not directly applicable to FFNs without bias terms, but we conjecture that with some modifications, their algorithm can be applied in this scenario to learn one-layer Transformers.

\section{Low Rank Recovery}\label{sec:low-rank}

We now consider the problem where the target matrix $\mW^{\star}$ has a known upper bound on the rank: $\rank(\mW^{\star}) \leq r$. This studies the scenario where the head dimension in an attention block is much smaller than the embedding dimension. In modern usage of Transformers, the head dimension is often small (e.g., 128) relative to the width (e.g., 4096 or 8192)~\citep{touvron2023llama, brown2020language}. For query and key projection matrices $\mQ, \mK \in \R^{r \times d}$, if $r \ll d$, then $\rank(\mW^{\star}) = \rank(\mK^{\top}\mQ) \leq r$. We show that in such a scenario, we can leverage techniques from compressed sensing \citep{zhang_rop} to recover the target parameters $\mW^{\star}, \vv^\star$ with $O(rd)$ queries, which is much more efficient than $O(d^2)$.

In simple terms, \citet{zhang_rop} show that for an unknown matrix $\mW \in \R^{d \times d}$ with $\rank(\mW) \leq r$, if one can obtain $O(rd)$ rank-one measurements $\va^\top \mW \vb \in \R$ for $O(rd)$ i.i.d. vectors $\va, \vb \sim \calN(0, I_d)$, then one can recover the low-rank matrix $\mW$ with high probability by solving a convex program (Theorem~\ref{thm:ROP-exact}). See Appendix~\ref{app:tools} for more details.


\begin{theorem}\label{thm:mq-lowrank} Assume $\vv^\star \neq \vzero$ and $\rank(\mW^\star) \leq r$. Then, with access to $\mq$, there is a randomized, polynomial–time algorithm that with probability at least $1-e^{-\Omega(m)}$ returns $(\widehat \mW,\widehat \vv)=(\mW^\star,\vv^\star)$ using $d+ m$ queries, where $m = O(rd)$. 
\end{theorem}

Recovering $\vv^\star$ is trivial and can be done in the same way as the vanilla problem by querying $X=[e_i^\top]$ for $i=1,\dots,d$.

Hence, the main focus is to recover $\mW^\star$ using rank-one measurements. Fix $\va,\vb\in\R^d$ and query
\[
X=\begin{bmatrix}(\va{+}\vb)^\top\\ \vb^\top\end{bmatrix}.
\]
Let $s_1=(\va{+}\vb)^\top \mW^\star \vb$, $s_2=\vb^\top \mW^\star \vb$. Then $s_1-s_2=\va^\top \mW^\star \vb$, and the attention weight on the first row equals
\[
\alpha=\frac{e^{s_1}}{e^{s_1}+e^{s_2}}=\sigma(s_1-s_2)=\sigma(\va^\top \mW^\star \vb).
\]
Since $Xv^\star=[(\va{+}\vb)^\top \vv^\star,\ \vb^\top \vv^\star]^\top$, the oracle returns $y=\vb^\top \vv^\star+\alpha\,(\va^\top \vv^\star)$.
If $\va^\top \vv^\star\neq 0$ (this occurs almost surely for Gaussian $\va$ when $\vv^\star\neq 0$), then
\[
\alpha=\frac{y-\vb^\top \vv^\star}{\va^\top \vv^\star}\in(0,1),
\]
\[
t:=\sigma^{-1}(\alpha)=\va^\top \mW^\star \vb=\langle \va \vb^\top,\,\mW^\star\rangle.
\]
Thus \emph{one} two–row VQ produces a rank–one sensing matrix $\mA:=\va\vb^\top$ and its linear observation $t=\langle \mA,\mW^\star\rangle$. 

Draw i.i.d.\ $(\va_k,\vb_k)\sim\mathcal N(0,I_d)\times\mathcal N(0,I_d)$, and for each pair query the oracle to compute
\[
t_k=\sigma^{-1}\!\left(\frac{y_k-\vb_k^\top \widehat \vv}{\va_k^\top \widehat \vv}\right)=\langle \va_k \vb_k^\top,\,\mW^\star\rangle,\,\,\,\, k=1,\dots,m.
\]
Let $\calA$ be the measurement operator with rows $\mA_k= \va_k \vb_k^\top$. We have gathered the noiseless ROP system $\calA(\mW^\star)=t\ \in\R^m$. Solve the convex program
\begin{equation}\label{eq:low-rank-opt}
\widehat \mW\in\arg\min_{\mW\in\R^{d\times d}}\ \|\mW\|_\ast
\end{equation}
\[
\text{s.t.}\quad \langle \va_k \vb_k^\top, \mW\rangle=t_k\ (k=1,\dots,m).
\]
By Theorem~\ref{thm:ROP-exact} (with $d_1=d_2=d$), if $m\ge C\,r(2d)$ then, with probability at least $1-e^{-\Omega(m)}$ over $\{(\va_k,\vb_k)\}$, the operator $\calA$ satisfies RUB with the required condition. Hence, by Theorem~\ref{thm:ROP-exact}, we have $\widehat \mW=\mW^\star$. This proves Theorem~\ref{thm:mq-lowrank}. 

The first part uses $d$ VQs and the recovery of $\mW^\star$ uses $m$ queries. Thus, the total is $d+m=O(rd)$. Program \eqref{eq:low-rank-opt} is convex and solvable in polynomial time.

\section{Robustness and Stability of Learning}\label{sec:mem-noise}

We study a variant of the learning problem where the learner only has access to \emph{approximate} value queries rather than exact ones. The primary motivation is that the algorithm for the noiseless case makes use of $\sigma^{-1}(\cdot)$ function which is not Lipschitz. Hence, if a teacher or an API access provider adds even a tiny amount of noise to the labels or values, then the output of the algorithm can change significantly and the guarantees do not hold. In this section, we show that, even with approximate values, with some assumptions on the norms of the parameters, we can still obtain some reasonable guarantees.

\textbf{Approximate value query oracle.}
Fix a target function $f^\star:\calX\to\R$ (here $f^\star=f_{\mW^\star,\vv^\star}$).
Given an input $X$ and tolerance $\tau>0$, the oracle returns $\AVQ(X;\tau)$ satisfying
\[
\AVQ(X;\tau)\in [f^\star(X)-\tau,\ f^\star(X)+\tau],
\]
and is deterministic (repeated queries on the same $X$ return the same value). The setting also resembles the widely studied statistical query oracle~\citep{kearns1998efficient} but is distribution-independent.

\textbf{Setup assumptions.}
We assume that the norms of the parameters are bounded, $\|\mW^\star\|_F\le W$ for a known scalar $W\ge 2$, $\|\vv^\star\|_2\le 1$ (or any constant). Additionally, we assume a margin for the value vector. Let $\calI \subseteq [d]$ such that $|\vv_i^\star| > 0 $ for all $i \in \calI$, then we assume $\min_{i\in \calI}|\vv_i^\star|\ge \mu>0$. Note that sparse vectors are permitted; we only assume that the nonzero entries cannot be arbitrarily close to $0$.

\textbf{Notation.}
Let $\sigma(t)=1/(1+e^{-t})$ and $\sigma^{-1}(p)=\log\bigl(p/(1-p)\bigr)$.
For $\tau\in(0,1/2)$ define $\mathrm{clip}(u;\tau,1-\tau)=\min\{\max\{u,\tau\},1-\tau\}$.

Let $\tau_{\mathrm{clip}} = \sigma(-\frac{1}{2})$. We will make use of the following elementary helper Lemma (see Lemma~\ref{lem:logit}), which shows that for all $\halpha \in \R$ and $\alpha^\star\in[\tau_{\mathrm{clip}},\,1-\tau_{\mathrm{clip}}]$, we have,
\[
|\sigma^{-1}(\mathrm{clip}(\halpha;\tau_{\mathrm{clip}},1-\tau_{\mathrm{clip}}))-\sigma^{-1}(\alpha^\star)| \leq 5\,|\halpha-\alpha^\star|.
\]
The Lemma uses the fact that $\sigma^{-1}(\cdot)$ is Lipschitz within a certain region, even if it is not globally Lipschitz.


\begin{theorem}\label{thm:avq-robust}
Fix \(d\ge 1\) and let \(f^\star=f_{\mW^\star,\vv^\star}\) with \(\|\mW^\star\|_F\le W\) for a known \(W\ge 2\), \(\|\vv^\star\|_2\le 1\). Assume $\vv^\star \neq \vzero$, let $\calI = \{ i \mid \, |\vv_i^{\star}| > 0 \}$  and \(\min_{i\in \calI}|\vv_i^\star|\ge \mu>0\).
Assume that the learner has access to a deterministic oracle \(\AVQ(\cdot;\tau)\) returning a value within \(\pm\tau\) of \(f^\star(\cdot)\). Then for any \(\epsilon_v,\epsilon_W\in(0,1)\) there is a polynomial-time algorithm that makes \(O(d^2)\) approximate value queries, each with tolerance $
\tau \;=\; O\!\left(\min\left\{\mu,\ \frac{\epsilon_v}{\sqrt d},\ \frac{\mu\,\epsilon_W}{W^2 d}\right\}\right)$ 
and outputs \((\widehat{\vv},\widehat{\mW})\) such that
\[
\|\widehat{\vv}-\vv^\star\|_2 \le \epsilon_v
\qquad\text{and}\qquad
\|\widehat{\mW}-\mW^\star\|_F \le \epsilon_W.
\]
\end{theorem}

Approximating $\vv^\star$ is straightforward; the key challenge is to approximate $\mW^\star$. The algorithm is almost exactly the same as the noiseless case, but the probes are scaled. The key technical idea is to construct probes such that $\sigma^{-1}(\cdot)$ is always applied to values within certain bounds where it is Lipschitz, and consequently, the error in the remaining operations can be controlled to achieve the desired guarantee. 

\begin{proof}
    
 To approximate $\vv^\star$, we can simply query basis vectors, $X=[\be_i^\top]$ for each $i \in [d]$ with tolerance $\tau_v$ and set
\[
\vhat_i:=\AVQ([\be_i^\top];\tau_v).
\]
Then $|\vhat_i-\vv_i^\star|\le \tau_v$ for all $i$, and therefore $\|\vhat-\vv^\star\|_2\le \sqrt{d}\,\tau_v$ and thus less than $\eps_v$ for $\tau_v \leq \eps_v/\sqrt{d}$.
If additionally $\tau_v\le \mu/2$, then $|\vhat_i|\ge \mu/2$ for every $i \in \calI$, which will be used to control ratios below. If $|\vhat_i|\le \mu/2$, then we know that $\vv^\star_i =0$.

\textit{Approximating $\mW^\star$.}
We want to estimate $\mW^\star$ such that $\|\What-\mW^\star\|_F\le \epsilon_W$.
Fix $a=\tfrac12$ and $b=1/W$ so that $|ab\,\mW^\star_{ij}|\le ab\|\mW^\star\|_F\le\tfrac12$.
For a two-row input $X=\begin{bmatrix} u^\top\\ x^\top\end{bmatrix}$, we can write
\begin{equation}\label{eq:two-row}
f_{\mW,\vv}(X)
=
x^\top \vv + \alpha_1\,(u^\top \vv - x^\top \vv),
\end{equation}
\begin{equation}\label{eq:logit-two-row}
\alpha_1=\sigma\bigl(u^\top \mW x - x^\top \mW x\bigr).
\end{equation}
Let $p \in [d]$ be any entry such that $|\vv_p^{\star}| > \mu$. To estimate entry $(i,j)$ of $\mW^\star$, use the probe 
\[
X=\begin{bmatrix} (b \vu + a \be_j)^\top\\ (a \be_j)^\top\end{bmatrix}
\]
where $\vu = \be_i$ if $|\vv_i^\star| \geq \mu$ and $\vu = \be_i + \be_p$ if $\vv_i^\star = 0$. Let's first consider the case where $|\vv_i^\star| \geq \mu$. Plugging it to \eqref{eq:two-row} we have,
\[
f^\star(X)=a\,\vv_j^\star+\alpha_1^\star\,(b\,\vv_i^\star),
\]
\[
\alpha_1^\star:=\alpha_1(X;\mW^\star)=\frac{f^\star(X)-a\,\vv_j^\star}{b\,\vv_i^\star}.
\]
On the other hand, the logit term in~\eqref{eq:logit-two-row} simplifies to
$(b\be_i+a\be_j)^\top\mW^\star(a\be_j)-(a\be_j)^\top\mW^\star(a\be_j)=ab\,\mW^\star_{ij}$, so
\[
\alpha_1^\star=\sigma(ab\,\mW^\star_{ij})
\qquad\Longrightarrow\qquad
\mW^\star_{ij}=\frac{1}{ab}\,\sigma^{-1}(\alpha_1^\star).
\]
Since $ab\,\mW^\star_{ij}\in[-\tfrac12,\tfrac12]$, we have $\alpha_1^\star\in[\sigma(-\tfrac12),\,1-\sigma(-\tfrac12)]$.
Let $\tau_{\mathrm{clip}}:=\sigma(-\tfrac12)$ and $L_{\tau_{\mathrm{clip}}}:=\frac{1}{\tau_{\mathrm{clip}}(1-\tau_{\mathrm{clip}})}\le 5$.

\textbf{Estimators from approximate values.}
Fix $(i,j)$ and let $X$ be the probe above. Query the oracle once on $X$ with tolerance $\tau_f$ and define
\[
\fhat(X):=\AVQ(X;\tau_f)=f^\star(X)+\eta_f,\qquad |\eta_f|\le \tau_f.
\]
Use the coordinate estimates $\vhat_i,\vhat_j$ obtained earlier with tolerance $\tau_v$, so
$\vhat_k=\vv_k^\star+\eta_k$ with $|\eta_k|\le \tau_v$.
Define
\begin{equation}\label{eq:robust-first-wt}
\halpha_1:=\frac{\fhat(X)-a\,\vhat_j}{b\,\vhat_i}.    
\end{equation}
We have
\[
\halpha_1-\alpha_1^\star
=
\frac{\fhat(X)-a\,\vhat_j}{b\,\vhat_i}-\alpha_1^\star
\]
\[
=
\frac{\fhat(X)-a\,\vv_j^\star+\;a(\vv_j^\star-\vhat_j)\;-\alpha_1^\star b\,\vhat_i}{b\,\vhat_i},
\]
and adding and subtracting $\alpha_1^\star b\,\vv_i^\star$ gives $\halpha_1-\alpha_1^\star$
\[
=
\frac{\fhat(X)-a\,\vv_j^\star-\alpha_1^\star b\,\vv_i^\star\;+\;a(\vv_j^\star-\vhat_j)\;+\;\alpha_1^\star b(\vv_i^\star-\vhat_i)}{b\,\vhat_i}.
\]
Since $f^\star(X)=a\,\vv_j^\star+\alpha_1^\star b\,\vv_i^\star$, the first three terms equal $\eta_f$, hence
\begin{equation}\label{eq:alpha-ratio}
|\halpha_1-\alpha_1^\star|
\le
\left|\frac{\eta_f + a(\vv_j^\star-\vhat_j) + \alpha_1^\star b(\vv_i^\star-\vhat_i)}{b\,\vhat_i}\right|
\end{equation}
\[
\le
\frac{|\eta_f| + a\,|\vhat_j-\vv_j^\star|}{|b\,\vhat_i|}
+
\frac{|\vhat_i-\vv_i^\star|}{|\vhat_i|}.
\]

\textbf{From tolerances to accuracy.}
For $\tau_v\le \mu/2$, we have $|\vhat_i|\ge \mu/2$ for all $i$. Then
\[
\frac{1}{|b\,\vhat_i|}\le \frac{2}{b\mu}=\frac{2W}{\mu},
\qquad
\frac{|\vhat_i-\vv_i^\star|}{|\vhat_i|}\le \frac{2}{\mu}\,\tau_v.
\]
Using $|\eta_f|\le \tau_f$ and $|\vhat_j-\vv_j^\star|\le \tau_v$ in~\eqref{eq:alpha-ratio} yields
\[
|\halpha_1-\alpha_1^\star|
\le
\frac{2W}{\mu}\bigl(\tau_f + a\,\tau_v\bigr)+\frac{2}{\mu}\tau_v
=
\frac{2W}{\mu}\tau_f+\frac{W+2}{\mu}\tau_v.
\]
With the choices
\[
\tau_f\le \frac{\mu\,\epsilon_W}{80\,W^2 d},
\qquad
\tau_v\le \min\left\{\frac{\mu}{2},\ \frac{\mu\,\epsilon_W}{80\,W^2 d}\right\},
\]
we obtain $|\halpha_1-\alpha_1^\star|\le \epsilon_W/(10Wd)$. Define
\[
\What_{ij}:=\frac{1}{ab}\,\sigma^{-1}\!\bigl(\mathrm{clip}(\halpha_1;\tau_{\mathrm{clip}},1-\tau_{\mathrm{clip}})\bigr).
\]
By Lemma~\ref{lem:logit},
\[
|\What_{ij}-\mW^\star_{ij}|
\le
\frac{L_{\tau_{\mathrm{clip}}}}{ab}\,|\halpha_1-\alpha_1^\star|
\le \frac{\epsilon_W}{d}
\quad\text{for all }(i,j),
\]
and hence $\|\What-\mW^\star\|_F\le \epsilon_W$.

The analysis for the case when $\vv_i^\star = 0$ is quite similar. With probe vector $\vu = \be_i + \be_p$, we have that,
\[
 \mW^{\star}_{ij} = \bigl(\frac{1}{ab}\sigma^{-1}(\alpha^\star) - \mW^\star_{pj}\bigr )
\]
and similarly, we will compute  $\What_{ij} = (\frac{1}{ab}\sigma^{-1}(\halpha) - \What_{pj})$. The attention weight at the first position \eqref{eq:robust-first-wt} would instead be 
\[
    \halpha_1:=\frac{\fhat(X)-a\,\vhat_j}{b\,\vhat_p}  
\]
where $|\vhat_p|\ge \mu/2$ and we can obtain $\frac{L_{\tau_{\mathrm{clip}}}}{ab}\,|\halpha_1-\alpha_1^\star|
\le \frac{\epsilon_W}{d}$ by following the same line of analysis and tolerance values. We then have $|\What_{ij} - \mW^{\star}_{ij}|$
\[
 = \left |\frac{1}{ab}\,\sigma^{-1}\!\bigl(\mathrm{clip}_{\tau_{\mathrm{clip}}}(\halpha_1\bigr) - \What_{pj} - \bigl(\frac{1}{ab}\sigma^{-1}(\alpha^\star_1) - \mW^\star_{pj}\bigr ) \right | 
\]
\[
\leq  \frac{L_{\tau_{\mathrm{clip}}}}{ab}\,|\halpha_1-\alpha_1^\star| + | \mW^\star_{pj} - \What_{pj} | \leq \frac{2 \epsilon_W}{d}.
\]

Dividing the tolerance values for the first case by $2$ gives the desired accuracy.

The reconstruction uses $d$ one-row queries and $d^2$ two-row probe queries, for a total of $d+d^2$ queries.
The only requirement is that the oracle can answer these queries with an additive error at most
$\tau_v,\tau_f = O\!\left(\frac{\mu\,\epsilon_W}{W^2 d}\right)$ (up to constants), and $\tau_v\le \mu/2$ to ensure the denominator in~\eqref{eq:alpha-ratio} is bounded away from $0$.
\end{proof}

\textbf{Membership queries.} So far, our analysis has focused on value queries where the oracle returns real-valued scalars. In App.~\ref{app:membership} we further consider a strictly weaker oracle in which each query returns only a binary response $\MQ(X)\in\{0,1\}$ indicating whether $f^\star(X)$ is positive. The main challenge is that the key step used for value queries, namely recovering an attention weight and inverting the sigmoid, is no longer available; instead, the learner only observes the sign of $f^\star(X)$.

Despite this loss of information, we show that the parameters remain efficiently recoverable, though under somewhat stronger assumptions on the vector $\vv^\star$. At a high level, to estimate each entry $\mW^\star_{ij}$ we use the same two-row probe family as in Theorem~\ref{thm:avq-robust}, but view it as a one-parameter set of queries indexed by a scalar $a>0$ with $ab$ fixed so that the logit remains bounded. Under this parametrisation, the membership response flips sign exactly once as a function of $a$, and a bisection search locates the corresponding threshold; this threshold estimate can then be algebraically converted into an additive estimate of $\mW^\star_{ij}$. See Appendix~\ref{app:membership} for details.

\section{Learnability of Multi-head Attention with Queries}\label{sec:multi-head}

In this section, we consider the problem of learning a multi-head attention layer. Let $d_h$ denote the embedding dimension of each head, and $d$ the total embedding dimension, such that the number of heads is $H=d/d_h$. The parameter matrices for the multi-head attention are given by $\mW^{(1)}, \ldots, \mW^{(H)} \in \R^{d \times d}$, and the corresponding projection vectors by $\vv^{(1)}, \ldots, \vv^{(H)} \in \R^d$. Let $W$ and $v$ denote the sets of all weight matrices and vectors, respectively. The output of the multi-head attention layer is then defined as (see App.~\ref{app:multi-head} for details of the formulation)

\[
 f^{H}_{W, v}(X)  = \sum_{h= 1}^{H} \balpha^{(h)}(X, \mWh)^{\top}(X\vvh).
\]

where $\balpha^{(h)}(X, \mWh)$ represents the attention weights computed for head $h$.

We observe that, unlike the single-head attention layer, not every set of parameters $(W, v)$ corresponds to a unique function $f^{H}_{W, v}(X)$. In particular, we show the following proposition.

\begin{proposition}
\label{prop:non-uniq-multi-head}
    Let $f^{H}_{W, v}$ be a multi-head attention layer with $H$ heads and with a set of parameters $(W, v)$. Then, there exist multiple sets of parameters $(W, v)$ that lead to the same function $f^{H}_{W, v}: \R^{N\times d} \to \R$.
\end{proposition}

\begin{proof}
    We consider the set of weight parameters $W$ such that the weight matrices of all heads are identical, i.e., $\mWh = \mA$ for all $h \in [H]$, where $\mA$ is any fixed matrix. Let $\vb \in \R^d$ be any arbitrary non zero vector and let $\lambda_1, \ldots, \lambda_H $ be $H$ non-negative weights such that $\sum_{i=1}^H \lambda_i = 1$. For any such weights, set $\vvh = \lambda_h\vb$ for all $h \in [H]$. Then, the multi-head attention model computes,
\[
    f^{H}_{W, v}(X)  = \sum_{h= 1}^{H} \balpha^{(h)}(X, \mWh)^{\top}(X\vvh)
\]
\[
= \balpha(X, \mA)^{\top}\sum_{h= 1}^{H}\lambda_h(X\vb) = \balpha(X, \mA)^{\top} (X\vb).
\]
Thus, for different weights $\lambda_1, \ldots, \lambda_H$, we will have different projection vectors $\vv^{(1)}, \ldots, \vv^{(H)}$ but all of them would lead to the same function as defined above. 
\end{proof}

\subsection{Discussion}
As a consequence of Proposition~\ref{prop:non-uniq-multi-head}, the parameters of the multi-head attention layer cannot be uniquely identified from value queries alone, and achieving guarantees like Theorem~\ref{thm:attmq} is not possible. For exact learning with queries to be feasible, the parameter-function map needs to be one-to-one. Different from efficient learning, we say a class of functions is identifiable from queries if there exists some set of input-output pairs that uniquely determines the target function. 

Learnability of multi-head attention with queries is left for future work, and many questions around that remain open. We discuss some potential directions and approaches to tackle the multi-head problem. (i)  It could be useful to first identify structural assumptions under which the class is identifiable. As a first step, in Appendix~\ref{app:multi-head}, we show that if the parameters of each head $(\mWh, \vvh)$ lie in mutually orthogonal subspaces, then the class is identifiable (up to permutation). We also conjecture that under the orthogonal subspace assumption, multi-head attention could be learnable with queries by leveraging the structure of the Hessian with respect to probe vectors (see App.~\ref{app:multi-head} for more details). (ii) Apart from parameter estimation, an interesting direction is to understand functional equivalence. Given access to queries, is it possible to learn a function that closely approximates the target even if the parameters are different? Even basic decidability questions around this are not clearly understood --- for instance, given two sets of parameters of multi-head attention models, is the problem of checking whether they represent the same function decidable? (iii) Further, it could be interesting to analyse whether distribution-dependent PAC guarantees (such as \citet{chen2021efficiently} for MLPs) are attainable, unlike distribution-independent ones for single-head attention.


\section*{Impact Statement}

This work takes a step toward understanding how the weights of attention-based and Transformer models can be recovered using only black-box access to their outputs. While the analysis focuses on a somewhat simplified setting and does not have immediate implications, it offers insights into the broader problem of weight extraction (or model stealing), which has important implications for AI security. Such insights can help identify potential API vulnerabilities and guide efforts to prevent model stealing. Beyond that, it also helps understand which components of Transformers are identifiable solely from input-output mappings.


\bibliography{citations}

@inproceedings{
daniely2023an,
title={An Exact Poly-Time Membership-Queries Algorithm for Extracting a Three-Layer Re{LU} Network},
author={Amit Daniely and Elad Granot},
booktitle={The Eleventh International Conference on Learning Representations },
year={2023},
url={https://openreview.net/forum?id=-CoNloheTs}
}

@article{chen2021efficiently,
  title={Efficiently learning one hidden layer relu networks from queries},
  author={Chen, Sitan and Klivans, Adam and Meka, Raghu},
  journal={Advances in Neural Information Processing Systems},
  volume={34},
  pages={24087--24098},
  year={2021}
}

@inproceedings{milli2019model,
  title={Model reconstruction from model explanations},
  author={Milli, Smitha and Schmidt, Ludwig and Dragan, Anca D and Hardt, Moritz},
  booktitle={Proceedings of the Conference on Fairness, Accountability, and Transparency},
  pages={1--9},
  year={2019}
}

@article{zhang_rop,
 ISSN = {00905364},
 URL = {http://www.jstor.org/stable/43556510},
 author = {T. Tony Cai and Anru Zhang},
 journal = {The Annals of Statistics},
 number = {1},
 pages = {102--138},
 publisher = {Institute of Mathematical Statistics},
 title = {ROP: MATRIX RECOVERY VIA RANK-ONE PROJECTIONS},
 urldate = {2025-08-18},
 volume = {43},
 year = {2015}
}

@article{murota2010numerical,
  title={A numerical algorithm for block-diagonal decomposition of matrix-algebras with application to semidefinite programming},
  author={Murota, Kazuo and Kanno, Yoshihiro and Kojima, Masakazu and Kojima, Sadayoshi},
  journal={Japan Journal of Industrial and Applied Mathematics},
  volume={27},
  number={1},
  pages={125--160},
  year={2010},
  publisher={Springer}
}

@article{murota2007numerical,
  title={A numerical algorithm for block-diagonal decomposition of matrix*-algebras},
  author={Murota, Kazuo and Kanno, Yoshihiro and Kojima, Masakazu and Kojima, Sadayoshi},
  journal={Part I: proposed approach and application to semidefinite programming, to appear in Japan Journal of Industrial and Applied Mathematics},
  year={2007}
}

@article{maehara2011algorithm,
  title={Algorithm for error-controlled simultaneous block-diagonalization of matrices},
  author={Maehara, Takanori and Murota, Kazuo},
  journal={SIAM Journal on Matrix Analysis and Applications},
  volume={32},
  number={2},
  pages={605--620},
  year={2011},
  publisher={SIAM}
}

@inproceedings{chen2025provably,
  title={Provably learning a multi-head attention layer},
  author={Chen, Sitan and Li, Yuanzhi},
  booktitle={Proceedings of the 57th Annual ACM Symposium on Theory of Computing},
  pages={1744--1754},
  year={2025}
}

@article{arnaboldi2025asymptotics,
  title={Asymptotics of SGD in Sequence-Single Index Models and Single-Layer Attention Networks},
  author={Arnaboldi, Luca and Loureiro, Bruno and Stephan, Ludovic and Krzakala, Florent and Zdeborova, Lenka},
  journal={arXiv preprint arXiv:2506.02651},
  year={2025}
}

@inproceedings{wang2024transformers,
author = {Wang, Zixuan and Wei, Stanley and Hsu, Daniel and Lee, Jason D.},
title = {Transformers provably learn sparse token selection while fully-connected nets cannot},
year = {2024},
publisher = {JMLR.org},
abstract = {The transformer architecture has prevailed in various deep learning settings due to its exceptional capabilities to select and compose structural information. Motivated by these capabilities, Sanford et al. (2023) proposed the sparse token selection task, in which transformers excel while fully-connected networks (FCNs) fail in the worst case. Building upon that, we strengthen the FCN lower bound to an average-case setting and establish an algorithmic separation of transformers over FCNs. Specifically, a one-layer transformer trained with gradient descent provably learns the sparse token selection task and, surprisingly, exhibits strong out-of-distribution length generalization. We provide empirical simulations to justify our theoretical findings.},
booktitle = {Proceedings of the 41st International Conference on Machine Learning},
articleno = {2125},
numpages = {59},
location = {Vienna, Austria},
series = {ICML'24}
}

@article{yau2024learning,
  title={Learning Linear Attention in Polynomial Time},
  author={Yau, Morris and Aky{\"u}rek, Ekin and Mao, Jiayuan and Tenenbaum, Joshua B and Jegelka, Stefanie and Andreas, Jacob},
  journal={arXiv preprint arXiv:2410.10101},
  year={2024}
}

@article{barnfield2025high,
  title={High-dimensional analysis of single-layer attention for sparse-token classification},
  author={Barnfield, Nicholas and Cui, Hugo and Lu, Yue M},
  journal={arXiv preprint arXiv:2509.25153},
  year={2025}
}

@article{tarzanagh2023transformers,
  title={Transformers as support vector machines},
  author={Tarzanagh, Davoud Ataee and Li, Yingcong and Thrampoulidis, Christos and Oymak, Samet},
  journal={arXiv preprint arXiv:2308.16898},
  year={2023}
}

@article{ataee2023max,
  title={Max-margin token selection in attention mechanism},
  author={Ataee Tarzanagh, Davoud and Li, Yingcong and Zhang, Xuechen and Oymak, Samet},
  journal={Advances in neural information processing systems},
  volume={36},
  pages={48314--48362},
  year={2023}
}

@article{sheen2024implicit,
  title={Implicit regularization of gradient flow on one-layer softmax attention},
  author={Sheen, Heejune and Chen, Siyu and Wang, Tianhao and Zhou, Harrison H},
  journal={arXiv preprint arXiv:2403.08699},
  year={2024}
}

@inproceedings{huang_icl24,
author = {Huang, Yu and Cheng, Yuan and Liang, Yingbin},
title = {In-context convergence of transformers},
year = {2024},
publisher = {JMLR.org},
abstract = {Transformers have recently revolutionized many domains in modern machine learning and one salient discovery is their remarkable in-context learning capability, where models can solve an unseen task by utilizing task-specific prompts without further parameters fine-tuning. This also inspired recent theoretical studies aiming to understand the in-context learning mechanism of transformers, which however focused only on linear transformers. In this work, we take the first step toward studying the learning dynamics of a one-layer transformer with softmax attention trained via gradient descent in order to in-context learn linear function classes. We consider a structured data model, where each token is randomly sampled from a set of feature vectors in either balanced or imbalanced fashion. For data with balanced features, we establish the finite-time convergence guarantee with near-zero prediction error by navigating our analysis over two phases of the training dynamics of the attention map. More notably, for data with imbalanced features, we show that the learning dynamics take a stagewise convergence process, where the transformer first converges to a near-zero prediction error for the query tokens of dominant features, and then converges later to a near-zero error for query tokens of under-represented features, via one and four training phases. Our proof features new techniques for analyzing the competing strengths of two types of attention weights, the change of which determines different training phases.},
booktitle = {Proceedings of the 41st International Conference on Machine Learning},
articleno = {793},
numpages = {63},
location = {Vienna, Austria},
series = {ICML'24}
}

@inproceedings{marion2025attention,
title={Attention layers provably solve single-location regression},
author={Pierre Marion and Rapha{\"e}l Berthier and G{\'e}rard Biau and Claire Boyer},
booktitle={The Thirteenth International Conference on Learning Representations},
year={2025},
url={https://openreview.net/forum?id=DVlPp7Jd7P}
}

@article{tian2023scan,
  title={Scan and snap: Understanding training dynamics and token composition in 1-layer transformer},
  author={Tian, Yuandong and Wang, Yiping and Chen, Beidi and Du, Simon S},
  journal={Advances in neural information processing systems},
  volume={36},
  pages={71911--71947},
  year={2023}
}

@article{magen2024benign,
  title={Benign overfitting in single-head attention},
  author={Magen, Roey and Shang, Shuning and Xu, Zhiwei and Frei, Spencer and Hu, Wei and Vardi, Gal},
  journal={arXiv preprint arXiv:2410.07746},
  year={2024}
}

@article{fefferman1993recovering,
  title={Recovering a feed-forward net from its output},
  author={Fefferman, Charles and Markel, Scott},
  journal={Advances in neural information processing systems},
  volume={6},
  year={1993}
}

@inproceedings{orekondy2019knockoff,
  title={Knockoff nets: Stealing functionality of black-box models},
  author={Orekondy, Tribhuvanesh and Schiele, Bernt and Fritz, Mario},
  booktitle={Proceedings of the IEEE/CVF conference on computer vision and pattern recognition},
  pages={4954--4963},
  year={2019}
}

@inproceedings{jagielski2020high,
  title={High accuracy and high fidelity extraction of neural networks},
  author={Jagielski, Matthew and Carlini, Nicholas and Berthelot, David and Kurakin, Alex and Papernot, Nicolas},
  booktitle={29th USENIX security symposium (USENIX Security 20)},
  pages={1345--1362},
  year={2020}
}

@inproceedings{tramer2016stealing,
  title={Stealing machine learning models via prediction $\{$APIs$\}$},
  author={Tram{\`e}r, Florian and Zhang, Fan and Juels, Ari and Reiter, Michael K and Ristenpart, Thomas},
  booktitle={25th USENIX security symposium (USENIX Security 16)},
  pages={601--618},
  year={2016}
}

@InProceedings{carlini2024stealing,
  title = 	 {Stealing part of a production language model},
  author =       {Carlini, Nicholas and Paleka, Daniel and Dvijotham, Krishnamurthy Dj and Steinke, Thomas and Hayase, Jonathan and Cooper, A. Feder and Lee, Katherine and Jagielski, Matthew and Nasr, Milad and Conmy, Arthur and Wallace, Eric and Rolnick, David and Tram\`{e}r, Florian},
  booktitle = 	 {Proceedings of the 41st International Conference on Machine Learning},
  pages = 	 {5680--5705},
  year = 	 {2024},
  editor = 	 {Salakhutdinov, Ruslan and Kolter, Zico and Heller, Katherine and Weller, Adrian and Oliver, Nuria and Scarlett, Jonathan and Berkenkamp, Felix},
  volume = 	 {235},
  series = 	 {Proceedings of Machine Learning Research},
  month = 	 {21--27 Jul},
  publisher =    {PMLR},
  pdf = 	 {https://raw.githubusercontent.com/mlresearch/v235/main/assets/carlini24a/carlini24a.pdf},
  url = 	 {https://proceedings.mlr.press/v235/carlini24a.html},
  abstract = 	 {We introduce the first model-stealing attack that extracts precise, nontrivial information from black-box production language models like OpenAI’s ChatGPT or Google’s PaLM-2. Specifically, our attack recovers the embedding projection layer (up to symmetries) of a transformer model, given typical API access. For under $20 USD, our attack extracts the entire projection matrix of OpenAI’s Ada and Babbage language models. We thereby confirm, for the first time, that these black-box models have a hidden dimension of 1024 and 2048, respectively. We also recover the exact hidden dimension size of the GPT-3.5-turbo model, and estimate it would cost under \$2,000 in queries to recover the entire projection matrix. We conclude with potential defenses and mitigations, and discuss the implications of possible future work that could extend our attack.}
}

@inproceedings{shi2017steal,
  title={How to steal a machine learning classifier with deep learning},
  author={Shi, Yi and Sagduyu, Yalin and Grushin, Alexander},
  booktitle={2017 IEEE International symposium on technologies for homeland security (HST)},
  pages={1--5},
  year={2017},
  organization={IEEE}
}

@inproceedings{vaswani2017attention,
  title={Attention is all you need},
  author={Vaswani, Ashish and Shazeer, Noam and Parmar, Niki and Uszkoreit, Jakob and Jones, Llion and Gomez, Aidan N and Kaiser, {\L}ukasz and Polosukhin, Illia},
  booktitle={Advances in neural information processing systems},
  pages={5998--6008},
  year={2017}
}

@article{touvron2023llama,
  title={Llama: Open and efficient foundation language models},
  author={Touvron, Hugo and Lavril, Thibaut and Izacard, Gautier and Martinet, Xavier and Lachaux, Marie-Anne and Lacroix, Timoth{\'e}e and Rozi{\`e}re, Baptiste and Goyal, Naman and Hambro, Eric and Azhar, Faisal and others},
  journal={arXiv preprint arXiv:2302.13971},
  year={2023}
}

@article{brown2020language,
  title={Language models are few-shot learners},
  author={Brown, Tom and Mann, Benjamin and Ryder, Nick and Subbiah, Melanie and Kaplan, Jared D and Dhariwal, Prafulla and Neelakantan, Arvind and Shyam, Pranav and Sastry, Girish and Askell, Amanda and others},
  journal={Advances in neural information processing systems},
  volume={33},
  pages={1877--1901},
  year={2020}
}

@article{kearns1998efficient,
  title={Efficient noise-tolerant learning from statistical queries},
  author={Kearns, Michael},
  journal={Journal of the ACM (JACM)},
  volume={45},
  number={6},
  pages={983--1006},
  year={1998},
  publisher={ACM New York, NY, USA}
}

@article{angluin1988queries,
  title={Queries and concept learning},
  author={Angluin, Dana},
  journal={Machine learning},
  volume={2},
  number={4},
  pages={319--342},
  year={1988},
  publisher={Springer}
}

@article{angluin1993learning,
  title={Learning read-once formulas with queries},
  author={Angluin, Dana and Hellerstein, Lisa and Karpinski, Marek},
  journal={Journal of the ACM (JACM)},
  volume={40},
  number={1},
  pages={185--210},
  year={1993},
  publisher={ACM New York, NY, USA}
}

@inproceedings{bshouty2013exact,
  title={Exact learning from membership queries: Some techniques, results and new directions},
  author={Bshouty, Nader H},
  booktitle={International Conference on Algorithmic Learning Theory},
  pages={33--52},
  year={2013},
  organization={Springer}
}
\bibliographystyle{icml2026}

\newpage
\appendix
\onecolumn

\section{Technical Tools}\label{app:tools}

\begin{lemma}\label{lem:logit}
If $\alpha^\star\in[\tau_{\mathrm{clip}},\,1-\tau_{\mathrm{clip}}]$ with $\tau_{\mathrm{clip}}=\sigma(-\tfrac12)$, then for all $\halpha\in\R$,
\[
\bigl|\sigma^{-1}(\mathrm{clip}(\halpha;\tau_{\mathrm{clip}},1-\tau_{\mathrm{clip}}))-\sigma^{-1}(\alpha^\star)\bigr|
\le L_{\tau_{\mathrm{clip}}}\,|\halpha-\alpha^\star|
\le 5\,|\halpha-\alpha^\star|.
\]
\end{lemma}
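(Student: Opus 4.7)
The plan is to reduce the estimate to a standard mean-value bound for the logit $\sigma^{-1}$, combining two elementary facts: the clipping map is a contraction that fixes $\alpha^\star$, and $(\sigma^{-1})'(p)=1/(p(1-p))$ is uniformly bounded on the clipping interval. Since the right-hand side already supplies the constant $L_{\tau_{\mathrm{clip}}}$, the entire proof is essentially a one-line chain of inequalities, and no substantive obstacle is expected.

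First, I would dispose of the clip by a contraction argument. Since $\alpha^\star\in[\tau_{\mathrm{clip}},1-\tau_{\mathrm{clip}}]$ by hypothesis, $\mathrm{clip}(\alpha^\star;\tau_{\mathrm{clip}},1-\tau_{\mathrm{clip}})=\alpha^\star$, and the map $x\mapsto\mathrm{clip}(x;\tau_{\mathrm{clip}},1-\tau_{\mathrm{clip}})$ is $1$-Lipschitz on $\mathbb{R}$. Therefore
\[
|\mathrm{clip}(\halpha;\tau_{\mathrm{clip}},1-\tau_{\mathrm{clip}})-\alpha^\star|
\le |\halpha-\alpha^\star|,
\]
and both points now lie inside $[\tau_{\mathrm{clip}},1-\tau_{\mathrm{clip}}]$.

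Next, I would apply the mean value theorem to $\sigma^{-1}$ on the segment joining $\mathrm{clip}(\halpha;\tau_{\mathrm{clip}},1-\tau_{\mathrm{clip}})$ and $\alpha^\star$. Because $(\sigma^{-1})'(p)=1/(p(1-p))$ is symmetric about $p=1/2$ and blows up only near $\{0,1\}$, its maximum over $[\tau_{\mathrm{clip}},1-\tau_{\mathrm{clip}}]$ is attained at the endpoints and equals $L_{\tau_{\mathrm{clip}}}:=1/(\tau_{\mathrm{clip}}(1-\tau_{\mathrm{clip}}))$. Composing with the contraction step yields
\[
|\sigma^{-1}(\mathrm{clip}(\halpha;\tau_{\mathrm{clip}},1-\tau_{\mathrm{clip}}))-\sigma^{-1}(\alpha^\star)|
\le L_{\tau_{\mathrm{clip}}}|\halpha-\alpha^\star|,
\]
which is the first inequality.

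For the second inequality I would compute $L_{\tau_{\mathrm{clip}}}$ explicitly using the identity $\tau_{\mathrm{clip}}(1-\tau_{\mathrm{clip}})=\sigma(-1/2)\sigma(1/2)=\sigma'(1/2)=e^{1/2}/(1+e^{1/2})^2$. Thus $L_{\tau_{\mathrm{clip}}}\le 5$ reduces to the elementary inequality $(1+e^{1/2})^2\le 5\,e^{1/2}$, which holds since $e^{1/2}\approx 1.6487$ gives $(1+e^{1/2})^2\approx 7.02<8.24\approx 5e^{1/2}$. This is the only numerical point worth checking; the rest is routine.
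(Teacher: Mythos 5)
Your proof is correct and follows essentially the same route as the paper's: clip is a $1$-Lipschitz map fixing $\alpha^\star$ and landing in $[\tau_{\mathrm{clip}},1-\tau_{\mathrm{clip}}]$, then the mean value theorem with $(\sigma^{-1})'(p)=1/(p(1-p))\le L_{\tau_{\mathrm{clip}}}$ on that interval. Your explicit verification that $L_{\tau_{\mathrm{clip}}}=(1+e^{1/2})^2/e^{1/2}\approx 4.26\le 5$ is a welcome addition, as the paper only asserts this bound.
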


\begin{proof}
On $(0,1)$, $(\sigma^{-1})'(p)=\frac{1}{p(1-p)}$, hence on $[\tau_{\mathrm{clip}},1-\tau_{\mathrm{clip}}]$ the derivative is at most
$L_{\tau_{\mathrm{clip}}}=\frac{1}{\tau_{\mathrm{clip}}(1-\tau_{\mathrm{clip}})}$.
Also, $u\mapsto \mathrm{clip}(u;\tau_{\mathrm{clip}},1-\tau_{\mathrm{clip}})$ is $1$-Lipschitz and maps into $[\tau_{\mathrm{clip}},1-\tau_{\mathrm{clip}}]$.
Applying the mean value theorem to $\sigma^{-1}$ on this interval leads to the claim, and $L_{\tau_{\mathrm{clip}}}\le 5$ gives the desired bound.
\end{proof}

\subsection{Background: Matrix Sensing}

For matrices $\mA, \mB$, we write $\langle \mA, \mB\rangle=\tr(\mA^\top \mB)$ for the Frobenius inner product on $\R^{d\times d}$, and $\|\mW\|_\ast$ for the nuclear norm (sum of singular values). For vectors $\va,\vb\in\R^d$, we use $\va\otimes \vb:=\va\vb^\top$ to denote a rank–one matrix. 

\textbf{Matrix sensing and the ROP model.}
A matrix sensing problem specifies an unknown matrix $\mW^\star\in\R^{d_1\times d_2}$ and a linear map
\[
\calA:\R^{d_1\times d_2}\to\R^m,\qquad (\calA(\mW))_k=\langle \mA_k,\mW\rangle\quad(1\le k\le m),
\]
with known sensing matrices $A_k$. \citet{zhang_rop} introduced the ``rank-one projection" (ROP) model that takes $\mA_k=\va_k \vb_k^\top$ with $\va_k\in\R^{d_1}$, $\vb_k\in\R^{d_2}$ (typically i.i.d. Gaussian), so each measurement is
\[
y_k=\langle \va_k \vb_k^\top,\,\mW^\star\rangle= \va_k^\top \mW^\star \vb_k.
\]

They also introduce a concept of Restricted uniform boundedness (RUB) property \citep[Definition 2.1]{zhang_rop}, which is analogous to the Restricted Isometry Property (RIP) but more suited to the ROP model. They show that the RUB property leads to guaranteed recovery of low-rank matrices. In particular, given a collection of matrices $\calA$ satisfying the RUB property and measurements  $y=\calA(\mW^\star)$, one can recover a low-rank matrix $\mW$ by solving a convex program which minimizes the nuclear norm $\min_{\mW\in\R^{d_1\times d_2}}\ \|\mW\|_\ast$ under the constraints that $ \langle \mA_k,\mW\rangle=y_k$ for all $k$.

The final useful fact is that constructing rank-one projections by randomly sampling i.i.d. from the standard Gaussian distribution $\mathcal N(0,I_d)$ leads to a collection $\calA = \{A_1, \ldots, A_m \}$ that satisfies the RUB property with probability at least $1 - e^{-\Omega(m)}$. See Section 2.1 in \citet{zhang_rop} for more details on the background described in this section.

\begin{theorem}[Recovery under ROP; {\citealp[Thm.~2.2 and Cor.~2.1]{zhang_rop}}]\label{thm:ROP-exact}

Let $\mA_k= \va_k \vb_k^\top$ with $\va_k\in\R^{d_1}$, $\vb_k\in\R^{d_2}$ i.i.d.\ standard Gaussian for $k = 1, \ldots, m$ where $m = \Omega(r(d_1 +d_2))$. For some unknown $\mW \in \R^{d_1 \times d_2}$, the measurements are of the form $(\calA(\mW)) =\langle \mA_k,\mW\rangle$. Then with probability at least $1 - e^{-\Omega(m)}$, the operator $\calA$ satisfies RUB and the following convex program recovers any rank $\leq$ r matrix $\mW$ exactly,

\begin{equation}\label{eq:nuc}
\min_{\mW\in\R^{d_1\times d_2}}\ \|\mW\|_\ast\quad\text{s.t.}\quad \langle \mA_k,\mW\rangle=y_k\ \ (k=1,\dots,m).
\end{equation}

\end{theorem}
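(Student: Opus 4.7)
I would follow the standard two-step recipe from compressed sensing: (i) show that the random rank-one Gaussian operator $\calA$ satisfies the Restricted Uniform Boundedness (RUB) property of \citet{zhang_rop} with probability at least $1-e^{-\Omega(m)}$, and (ii) show that RUB forces the nuclear-norm program \eqref{eq:nuc} to recover every rank-$\le r$ matrix as its unique optimum. The exactness claim and the probability bound fall out of these two steps, and the corollary for $d_1=d_2=d$ is immediate by substitution.

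\textbf{Step (i): RUB with high probability.} Fix $\mW$ with $\rank(\mW)\le r$ and write the SVD $\mW=\mU\mSigma\mV^\top$. Since the rows of $\mU^\top$ and $\mV^\top$ are isometries, $\mU^\top\va_k$ and $\mV^\top\vb_k$ remain standard Gaussian in dimension $r$, and each measurement factors as $\va_k^\top\mW\vb_k=(\mU^\top\va_k)^\top\mSigma(\mV^\top\vb_k)$, a bilinear form in low-dimensional Gaussians. Hanson--Wright-type tail bounds then imply that $\tfrac{1}{m}\sum_{k=1}^{m}|\va_k^\top\mW\vb_k|$ concentrates around a universal multiple of $\|\mW\|_F$, with deviation probability $e^{-\Omega(m)}$. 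To promote this pointwise concentration to a uniform statement, I would apply an $\varepsilon$-net argument on $\{\mW:\rank(\mW)\le r,\ \|\mW\|_F=1\}$; this manifold has covering-number exponent $O(r(d_1+d_2)\log(1/\varepsilon))$, which balances against $e^{-\Omega(m)}$ precisely when $m=\Omega(r(d_1+d_2))$.

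\textbf{Step (ii): Nuclear-norm recovery from RUB.} Given RUB, let $\widehat{\mW}$ be any optimum of \eqref{eq:nuc}, so $\calA(\widehat{\mW}-\mW^\star)=\vzero$ and $\|\widehat{\mW}\|_\ast\le\|\mW^\star\|_\ast$. Let $T$ be the tangent space of the rank-$r$ variety at $\mW^\star$ and split the error as $\mH=\mH_T+\mH_{T^\perp}$, where $\rank(\mH_T)\le 2r$ and $\mH_{T^\perp}$ is supported off the row/column span of $\mW^\star$. Optimality yields the standard cone condition $\|\mH_{T^\perp}\|_\ast\le\|\mH_T\|_\ast$; feasibility $\calA(\mH)=\vzero$ yields $\|\calA(\mH_T)\|_1=\|\calA(\mH_{T^\perp})\|_1$. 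Applying the RUB lower bound (valid since $\mH_T$ is low-rank) on the left, and the matching upper bound on the right (in terms of $\|\mH_{T^\perp}\|_\ast$), and combining with the cone inequality, forces $\mH_T=\vzero$ and hence $\mH=\vzero$.

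\textbf{Main obstacle.} The technical heart is the uniform RUB bound in step (i): rank-one Gaussian probes are much heavier-tailed than unstructured Gaussian measurements, since $\va_k^\top\mW\vb_k$ is a product of Gaussians rather than a single Gaussian functional. This is why the usual $\ell_2/\ell_2$ RIP fails in the ROP setting and one must instead use the asymmetric, $\ell_1$-flavored RUB. Carefully truncating these heavy-tailed bilinear forms while keeping the union bound tight enough to preserve the linear-in-$(d_1+d_2)$ sample complexity is the main analytical burden in \citet{zhang_rop}; once RUB is in hand, step (ii) is a direct adaptation of the RIP-based nuclear-norm recovery argument from the classical matrix-sensing literature.
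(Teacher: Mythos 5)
This statement is not proved in the paper at all: it is imported verbatim from \citet{zhang_rop} (the header cites their Thm.~2.2 and Cor.~2.1), and the surrounding text only summarises the ROP/RUB framework before applying the theorem as a black box in Section~\ref{sec:low-rank}. So there is no internal proof to compare your attempt against; what you have written is a blind reconstruction of the \emph{cited} proof. As such a reconstruction it is broadly faithful: the Cai--Zhang argument does proceed by (i) establishing RUB for Gaussian rank-one probes via moment/truncation bounds on the heavy-tailed bilinear forms $\va_k^\top\mW\vb_k$ plus a covering argument over the unit-Frobenius-norm rank-$\le r$ set, and (ii) deducing exact recovery of the nuclear-norm program from RUB by a null-space-property-style argument. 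Two places where your sketch elides genuine content: first, the \emph{lower} half of RUB is not just concentration around a mean --- you need the expectation $\E\bigl|\va^\top\mW\vb\bigr|$ to be bounded below by a constant multiple of $\|\mW\|_F$ uniformly over low-rank $\mW$ (the bilinear form is a product of Gaussians, so this requires a small-ball/Paley--Zygmund-type step, not Hanson--Wright alone, which only controls deviations). Second, in step (ii) the RUB upper bound applies only to low-rank matrices, so controlling $\|\calA(\mH_{T^\perp})\|_1$ in terms of $\|\mH_{T^\perp}\|_\ast$ requires the standard decomposition of $\mH_{T^\perp}$ into rank-$r$ blocks ordered by singular values, and the final contradiction $\mH_T=\vzero$ only goes through when the two RUB constants satisfy a quantitative ratio condition --- this is exactly why \citet{zhang_rop} formulate RUB of order $kr$ for a suitable constant $k$ rather than order $r$. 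Neither point is a wrong turn, but both are where the actual work in the cited reference lives; for the purposes of this paper the theorem is (appropriately) used as an external result rather than reproved.
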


\section{Learning Attention Head from Membership Queries}\label{app:membership}

We want to identify the target parameters $(\mW^*,\vv^*)$ when the learner has access to a membership query oracle $g_{W*, v*} : \mathbb{R}^{N\times d} \rightarrow \{0,1\}$ where

\begin{equation}
    g_{W*, v*}(X) := \begin{cases} 1 & \text{ if } f_{\mW*, \vv*}(X) > 0 \\ 0 & \text{ else } \end{cases}
\end{equation}
where, as above
\begin{equation}
    f_{\mW,\vv}(\mX) = \alpha(\mX, \mW)^\top (\mX \vv)
\end{equation}
Note that we can only expect to identify the direction of $\vv$, but not its magnitude.
We may thus assume, without loss of generality, that $\|\vv\|=1$.

We will work with
\begin{equation}
    h_{W*, v*}(X) := \operatorname{sign}(f_{\mW*, \vv*}(X))
\end{equation}
and use

\begin{equation}
    h_{W*, v*}(X) = 2 g_{W*, v*}(X) - 1 %
\end{equation}

\textbf{Assumptions.} We assume:
$\|\mW^\star\|_F\le W$ for a known scalar $W\ge 2$, $\| \vstar\|_2 = 1$, and a value margin $\min_{i\in[d]}| \vv_i^\star|\ge \mu>0$. 


\begin{theorem}[Recovery from membership queries]\label{thm:membership-query}
Fix \(d\ge 1\) and let \(f^\star=f_{\mW^\star,\vv^\star}\) with \(\|\mW^\star\|_F\le W\) for a known \(W\ge 2\), \(\|\vv^\star\|_2 = 1\), and \(\min_{i\in[d]}|\vv_i^\star|\ge \mu>0\).
Assume 
that the learner has access to a deterministic oracle \(\MQ(\cdot)\) returning 1 if \(f^\star(\cdot) > 0\), and 0 otherwise. Then for any $\delta > 0$ and for any \(\epsilon_W\in(0,1)\) there is a polynomial-time algorithm that makes \(O(\frac{d}{\epsilon_v^2}\log \frac{d^2}{\delta} + d^2 \log \frac{d}{\epsilon_W\mu})\) membership queries where $\epsilon_v := \frac{\epsilon_W\mu^3}{40Wd}$, with outputs \((\widehat{\vv},\widehat{\mW})\) such that
\[
\|\widehat{\vv}-\vv^\star\|_\infty \le \epsilon_v
\qquad\text{and}\qquad
\|\widehat{\mW}-\mW^\star\|_F \le \epsilon_W.
\]
with probability $1-\delta$.
\end{theorem}

\paragraph{Approximating $\vstar$.} With length-one queries, the problem is essentially equivalent to learning a halfspace with membership queries which is well-studied. In this setting, we can use random queries with $\vx$ from the unit sphere, and use the identity:
\begin{equation}
    \mathbb{E}_{\vx \sim \textrm{Uniform}(\mathbb{S}^{d-1})}[\operatorname{sign}(\vv^T\vx)\vx] = R \cdot \frac{\vv}{||\vv||}
\end{equation}
where $R := \frac{\Gamma(d/2)}{\sqrt{\pi}\Gamma((d+1)/2)} = \Theta(1/\sqrt{d})$.
Thus
\begin{equation}
    \mathbb{E}_{\vx \sim \textrm{Uniform}(\mathbb{S}^{d-1})}[(2g_{W^*,v^*}(\vx)-1)\vx] = \mathbb{E}_{\vx \sim \textrm{Uniform}(\mathbb{S}^{d-1})}[h_{W^*,v^*}(\vx)\vx] = \mathbb{E}_{\vx \sim \textrm{Uniform}(\mathbb{S}^{d-1})}[\operatorname{sign}(\vv^T\vx)\vx]
\end{equation}
where the first equality holds provided the probability of $f_{W^*, v^*}(\vx) = 0$ is zero; this is true if and only if $\vv \neq 0$, which is true by assumption.

Let $k$ be the number of queries, and estimate
\begin{equation}
    \hat{\vv} :=  \frac{1}{R} \frac{1}{k} \sum_{i=1}^k (2g_{\mW^*, \vv^*}(\vx_i)-1)\vx_i = \frac{1}{R} \frac{1}{k} \sum_{i=1}^k \operatorname{sign}(\vv^T\vx_i)\vx_i
\end{equation}
where $\vx_i \in \mathbb{R}^{1 \times d}$, and the equality holds a.s. for the choice of the query vectors.

By Hoeffding's inequality, for any coordinate $j = 1, \dots d$,
\begin{equation}
    \mathbb{P}\left(|\hat{\vv}_j - \vv^*_j| \geq t\right) \leq 2\exp\left(-\frac{2t^2R^2k}{4}\right)
\end{equation}
Thus $\|\hat{\vv}-\vstar\|_\infty \leq t$ with probability $1-d\cdot2\exp\left(-\frac{t^2R^2k}{2}\right)$.
Substitute $\delta := 2d \exp\left(-\frac{t^2R^2k}{2}\right)$, hence $\frac{2}{t^2R^2} \log \frac{2d}{\delta} = k$.
Thus, with $k=O\left(\frac{1}{R^2\epsilon_v^2} \log \frac{d^2}{\delta}\right)=O\left(\frac{d}{\epsilon_v^2} \log \frac{d^2}{\delta}\right)$ queries, we get $L_\infty$ error $\leq \epsilon_v$ with probability $1-\delta$. Note that, a query $\sign(\vv^\top \vx)$ is scale-invariant and one cannot hope to recover $\|\vv\|_2$ regardless of the number of queries.

\paragraph{Approximating $\mW^\star$.}

Define $\eta_W = \frac{\epsilon_W}{d}$.

We want to estimate $\mW^{\star}$ such that $\|\What - \mW^{\star}\|_F \leq d\eta_W = \epsilon_W$.
As in the noise robustness case, for a two-row input $X=\begin{bmatrix} u^\top\\ x^\top\end{bmatrix}$, we have,
\[
f_{\mW,\vv}(X) = x^\top \vv + \alpha_1\,(u^\top \vv - x^\top \vv),\qquad
\alpha_1=\sigma \bigl(u^\top \mW x - x^\top \mW x\bigr).
\]
To estimate the entry $(i, j)$ of $\mW^\star$, we will use the following input as a probe,
\[
X=\begin{bmatrix} (b \be_i + a \be_j)^\top\\ (a \be_j)^\top\end{bmatrix},
\]
where $\be_i$ is the ith standard basis vector or one-hot vector. Then, we have
\[
f_{\mW^\star, \vstar}(X) =  a\, \vv_j^\star + \alpha_1^\star \cdot b\, \vv_i^\star
\]

Unlike in the Noise Robustness case, here we just observe a thresholded value
\[
g_{\mW^\star, \vstar}(X) = H(f_{\mW^\star, \vstar}(X)) =  H(a\, \vv_j^\star + \alpha_1^\star \cdot \ b \cdot \vv_i^\star),
\qquad
\alpha_1^\star=\sigma(ab\mW^\star_{ij}),
\]
where $H(x) := 1_{x>0}$.

We want to identify $\mW_{ij}$ by locating the sign change, i.e., finding $a, b$ such that
\begin{equation}
    0 = a\, \vv_j^\star + \alpha_1^\star \cdot \ b \cdot \vv_i^\star
\end{equation}
and thus
\begin{equation}\label{eq:membership-equation-rewritten}
     \frac{- a\, \vv_j^\star}{\vv_i^\star} =  \alpha_1^\star \cdot \ b = \sigma(ab\mW_{ij}) \cdot b
\end{equation}
Once we have (approximately) identified such $a, b$, we will be able to (approximately) determine $\mW_{ij}$ from this equation.
Set $\alpha = \frac{\vv_j}{\vv_i}$. 
Note that $a, b$ are under our control in designing the query; $\alpha$ is already known up to small approximation error; $\mW_{ij}$ is unknown.


%

We reparameterize by introducing a constant $s:= -\sign(\alpha) \cdot \frac{1}{2W}$, and define $b:=s/a$. The problem (\ref{eq:membership-equation-rewritten}) thus takes the form
\begin{equation}\label{eq:membership-equation-one-parameter}
    -a \alpha = \sigma(s \mW_{ij}) \frac{s}{a}
\end{equation}
Our queries are from a one-parameter family ($a > 0$): \[
\mX(a)=\begin{bmatrix} (b \be_i + a \be_j)^\top\\ (a \be_j)^\top\end{bmatrix} = \begin{bmatrix} (\frac{s}{a} \be_i + a \be_j)^\top\\ (a \be_j)^\top\end{bmatrix};
\] 
The goal is to bisect over $a$ in some interval to approximately identify an $a$ at which $f_{W^*, v^*}(\mX(a)) = 0$. For this, we would like to  determine an interval inside which there is guaranteed to be exactly one such solution $a$.
Defining
\begin{equation}
    F(a) := a \alpha + \frac{\sigma(\mW_{ij} s)s}{a}
\end{equation}
we have
\begin{align*}
h_{\mW^*, \vv^*}(X) =&      \sign\left(a \vv_j^* + \alpha_1^* b \vv_i^*\right) = \sign\left(a \vv_j^* + \alpha_1^* \frac{s}{a} \vv_i^*\right) = \sign\left(a \vv_j^* + \frac{\sigma(\mW_{ij} s)s}{a} \vv_i^*\right) \\
    =& \sign(\vv_i^*) \sign\left(a \frac{\vv_j^*}{\vv_i^*} + \frac{\sigma(\mW_{ij} s)s}{a}\right) \\
    =& \sign(\vv_i^*) \sign\left(\underbrace{a \alpha + \frac{\sigma(\mW_{ij} s)s}{a}}_{F(a)}\right)
\end{align*}
The function $F(a)$ is strictly monotone in $a \in (0,\infty)$, because its derivative is
\begin{equation}
    F'(a) = \alpha - \frac{\sigma(\mW_{ij} s)s}{a^2}
\end{equation}
and note our construction ensures $\sign(\alpha) \sign(s) = -1$, giving this constant sign.
Also, $\lim_{a \downarrow 0} F(a) = \sign(s) \cdot \infty$, $\lim_{a \rightarrow \infty} F(a) = \sign(\alpha) \cdot \infty = -\sign(s) \cdot \infty$.
Thus, for $a \in (0,\infty)$, $f_{W^*, v^*}(\mX(a))$ has exactly one zero, and $h_{\mW^*, \vv^*}(\mX(a))$ flips sign exactly once. We now want to identify a specific interval inside which the sign flip occurs.
First, we note that a positive solution $a$ of (\ref{eq:membership-equation-one-parameter}) satisfies: 
\begin{equation}
    a = \sqrt{-\frac{\sigma(\mW_{ij} s)s}{\alpha}} = \sqrt{\frac{\sigma(\mW_{ij} s)}{2W|\alpha|}}
\end{equation}
which gives us an a-priori upper and lower bound on $a$:
\begin{equation}
       \sqrt{\frac{1}{2W \mu}} \geq
      a  = \sqrt{\frac{\sigma(\mW_{ij} s)}{2W|\alpha|}} \geq \sqrt{\frac{\sigma(-W \frac{1}{2W}) \mu}{2W}}  \geq \sqrt{\frac{\sigma(-\frac{1}{2}) \mu}{2W}} \geq \sqrt{\frac{ \mu}{8W}} 
\end{equation}
where we used $|\alpha| \in [\mu, \frac{1}{\mu}]$.


By bisecting $a$ within the interval between these two bounds, we can get $a$ up to close approximation with error $\tau$, with
\begin{equation}
    \mathcal{O}\left(\log \frac{1}{2W\mu\tau}\right)
\end{equation}
queries.
That is, we find $\hat{a}$ that has deviation at most $\tau$ from the unique point $a$ where $f_{W^*, h^*}(\mX(a)) = 0$:
\begin{equation}
a = \sqrt{-\frac{\sigma(\mW_{ij} s)s}{\alpha}}
\end{equation}
Note that the input to $\sigma(\cdot)$ has absolute value bounded by $\frac{1}{2}$.
Note also that the input to the square root is bounded by
\begin{equation}
    \frac{\sigma(\mW_{ij} s) \sign(\alpha)}{2W \alpha} \leq \frac{1}{2W|\alpha|} \leq \frac{1}{2W\mu}
\end{equation}


Note further that $\alpha$ is known as an estimate $\hat{\alpha} := \frac{\hat{\vv}_j}{\hat{\vv}_i}$ up to error $\Delta_\alpha$ with $|\Delta_\alpha| = \mathcal{O}(\frac{\epsilon_v}{\mu^2})$: 
\begin{align*}
    |\Delta_\alpha| &:= |\hat\alpha-\alpha| = \left| \frac{\vv_j+\Delta_j}{\vv_i+\Delta_i} - \frac{{\vv}_j}{{\vv}_i}\right| = \left| \frac{{\vv}_i (\vv_j+\Delta_j) - {\vv}_j (\vv_i+\Delta_i)}{{\vv}_i (\vv_i-\Delta_i)}\right| = \left| \frac{{\vv}_i \Delta_j +\vv_j\Delta_i}{{\vv}_i (\vv_i+\Delta_i)}\right| \leq  \frac{|{\vv}_i| \cdot |\Delta_j| +|\vv_j| \cdot |\Delta_i|}{|{\vv}_i| \cdot |\vv_i+\Delta_i|} \\
    & \leq  \frac{2\epsilon_v}{\mu (\mu-\epsilon_v)} \leq \frac{4\epsilon_v}{\mu^2}
\end{align*}
where $\Delta := \hat{\vv} - \vv$, and $\epsilon_c = \frac{\eta_W\mu^3}{2W} \leq \frac{\mu}{2}$.

We first truncate $\hat{a}$ to be in $[0, \sqrt{\frac{1}{2W\mu}}]$.
We then take $\hat{a}^2$ and obtain an approximation to
\begin{equation}
    -\frac{\sigma(\mW_{ij} s)s}{\alpha}
\end{equation}
with error $E \leq \tau \sqrt{\frac{2}{W\mu}} = \mathcal{O}(\frac{\tau}{\sqrt{W\mu}})$. 

Now
\begin{align*}
    & \left(-\frac{\sigma(\mW_{ij} s)s}{\alpha} + E\right) \cdot \hat{\alpha} \\
    =& \left(-\frac{\sigma(\mW_{ij} s)s}{\alpha} + E\right) \cdot (\alpha+\Delta_\alpha) \\
    =& -\sigma(\mW_{ij} s)s+ -\frac{\sigma(\mW_{ij} s)s}{\alpha}\Delta_\alpha + E\alpha + E\Delta_\alpha \\
\end{align*}
The error is bounded by
\begin{align*}
\leq     \underbrace{\left|\frac{\sigma(\mW_{ij} s)s}{\alpha}\right|}_{ \leq 1/(2W\mu)} \underbrace{|\Delta_\alpha|}_{\leq 4\epsilon_v/\mu^2} + \underbrace{|E\alpha|}_{O(\frac{\tau}{\sqrt{W\mu}\mu})} + \underbrace{|E\Delta_\alpha|}_{O(\frac{\tau \epsilon_v}{\sqrt{W\mu}\mu^2})} \leq  \frac{2\epsilon_v }{W\mu^3} + O(\frac{\tau}{\sqrt{W\mu}\mu}) + O(\frac{\tau \epsilon_v}{\sqrt{W\mu}\mu^2}) \leq \frac{2\epsilon_v }{W\mu^3}  + O(\frac{\tau}{\mu^3})
%
%
%
%
\end{align*}
where we used $\epsilon_v, \mu \leq 1$; $W \geq 2$.
Take $L$ to be the Lipschitz constant of $\sigma^{-1}$ on $[\sigma(-0.5), \sigma(0.5)]$; we have $L < 5$.
We get the error to be $\leq \frac{\eta_W}{4W^2L}$, by setting
\begin{align*}
    \tau := & \frac{\eta_W\mu^3}{8LW^2} \cdot Const 
\end{align*}
given the choice $\epsilon_v = \frac{\mu^3\epsilon_W}{40Wd} = \frac{\mu^3\eta_W}{40W} \leq \frac{\mu^3\eta_W}{8LW}$ in the theorem statement.
Now we have an estimate of
\begin{equation}
- \sigma(\mW_{ij} s)s = \sigma(\mW_{ij} s) \frac{\sign(\alpha)}{2W}
\end{equation}
with error bounded by $\frac{\eta_W}{4W^2L}$.
Multiplying by $2W$, we have an estimate of 
\begin{equation}
    \sigma(\mW_{ij} s)
\end{equation}
with error bounded by $\frac{\eta_W}{2WL}$.

Truncating to $[\sigma(-0.5), \sigma(0.5)]$ if needed (note that $|\mW_{ij}s| \leq \frac{1}{2}$), we then get an estimate of $\mW_{ij} s$ with error bounded as $\frac{\eta_W}{2W}$.

Multiplying by $s^{-1}$ gets us to an estimate of $\mW_{ij}$ with error bounded as $\eta_W = \epsilon_W/d$, with (combining the expression for the number of queries in bisection with our expression for $\tau$) \begin{equation}
    \mathcal{O}\left(\log \frac{1}{\eta_W \mu}\right)
\end{equation} (constants depending on $W$ are absorbed in $\mathcal{O}(\cdot)$) queries.

\section{Multi-head Attention}\label{app:multi-head}

For positive integers $H$ and $d_h$, let $d = Hd_h $. We will have $H$ parameter matrices $\mW^{(1)}, \ldots, \mW^{(H)} \in \R^{d \times d}$ with rank $\leq d_h$ where $\mWh = (\mK^{(h)})^\top \mQ^{(h)}$ similar to the single head definition. We will have $H$ value projection matrices $\mV^{(1)}, \ldots, \mV^{(H)} \in \R^{d_h \times d}$ and one output projection vector $\vw_o \in \R^{d \times 1}$. The $h$th attention weight will then be computed by

\[
\balpha^{(h)}(X, \mWh) = \softmax(\vx_1^{\top}\mW^{(h)}\vx_N, \ldots, \vx_N^{\top}\mW^{(h)}\vx_N) \in\Delta^{N-1}
\]

and the output of the $h$th head will be,
\[
\va^{(h)} = \balpha^{(h)}(X, \mWh)^{\top}(X(\mVh)^{\top} ) \in \R^{d_h}.
\]
Following the standard definition, the outputs are then concatenated to compute the final output,

\[
    \va = [\va^{(1)}, \ldots, \va^{(H)}] \in \R^{d \times 1} \quad f^{H}_{\theta}(X) = \va^\top\vw_o \in \R,
\]

where $\theta $ has all the parameter matrices $\mWh, \mVh$ for all $h$ and the vector $\vw_o$. Note that we can merge the value matrices and output projection matrices in the following way. Partition $\vw_o$ in $H$ equal and contiguous parts $\vw_o = [\vw_o^{(1)}, \ldots, \vw_o^{(H)}]$ where $\vw_o^{(h)} \in \R^{d_{h}}$. Then for vectors $\vv^{(h)} = (\mVh)^{\top}\vw_o^{(h)} \in \R^{d \times 1}$, we can rewrite,

\[
    f^{H}_{\theta}(X) = \sum_{h= 1}^{H} \balpha^{(h)}(X, \mWh)^{\top}(X(\mVh)^{\top} )\vw_o^{(h)} = \sum_{h= 1}^{H} \balpha^{(h)}(X, \mWh)^{\top}(X\vvh).
\]

Thus, the multi-head attention model can be seen as a sum of $H$ single-head attention models with their own parameters $(\mWh, \vvh)$.

\subsection{Identifiability under Orthogonal Subspace Assumption}

We show that the following orthogonal subspace assumption on the parameters $(W, v)$ suffices to make $f^{H}_{W, v}$ identifiable from queries. 

\begin{assumption}
\label{asmp:ortho-subspace}
Let the parameters of $h^{\textrm{th}}$ head $\mWh$ and $\vvh$ lie in the subspace $U_h \subset \R^d$ and let $U_1,\ldots,U_H\subset\R^d$ be pairwise orthogonal subspaces with direct sum
$\R^d=\bigoplus_{h=1}^H U_h$ and dimensions $d_h:=\dim(U_h)$ (so $\sum_{h=1}^H d_h=d$).
Let $\mP_h\in\R^{d\times d}$ be the orthogonal projector onto $U_h$
($\mP_h^\top=\mP_h$, $\mP_h^2=\mP_h$, and $\mP_g\mP_h=\vzero$ for $g\neq h$).
Assume for each head $h$ that
\[
\vv^{(h)}\in U_h\setminus\{\vzero\}\subset\R^d,
\qquad
\mW^{(h)}\in\R^{d\times d}\ \text{satisfies}\ 
\mP_g\mW^{(h)}=\mW^{(h)}\mP_g=\vzero\ \text{for all }g\neq h,
\]
equivalently, $\mW^{(h)}=\mP_h\,\mB_h\,\mP_h$ for some $\mB_h\in\R^{d\times d}$ with
$\mathrm{range}(\mB_h)\subseteq U_h$.    
\end{assumption}

\begin{proposition}[Identifiability up to permutation]\label{prop:orth-ident}
Under the orthogonal subspace assumption (assumption~\ref{asmp:ortho-subspace}), the multi-head attention class is identifiable from queries, uniquely up to a permutation of the heads.
\end{proposition}

\begin{proof}
First note that, for any $h \in [H]$, if we have $X \in \R^{N \times d}$ such that all rows of $X$ are in $U_h$, then $X\vv^{(g)} = \vzero$ for all $g \neq h$. Thus, only a single head contributes to the output of the multi-head attention model,
\[
f^{H}_{W,v}(X)\;=\;\sum_{g=1}^{H}\,\balpha^{(g)}\!\bigl(X,\mW^{(g)}\bigr)^{\top}\bigl(X\vv^{(g)}\bigr)
\;=\;\balpha^{(h)}\!\bigl(X,\mW^{(h)}\bigr)^{\top}\bigl(X\vv^{(h)}\bigr).
\]
Thus, on inputs supported in $U_h$, the multi-head model reduces exactly to the single-head model with parameters $(\mW^{(h)},\vv^{(h)})$.

Let $\{\bu^{(h)}_1,\ldots,\bu^{(h)}_{d_h}\}$ be an orthonormal basis of $U_h$ and write
$\mE_h:=[\bu^{(h)}_1\ \cdots\ \bu^{(h)}_{d_h}]\in\R^{d\times d_h}$, so that
$\mP_h=\mE_h\mE_h^\top$ and every $\vx\in U_h$ has a unique coordinate vector
$\vz\in\R^{d_h}$ with $\vx=\mE_h\vz$. For any $X$ whose rows lie in $U_h$ there exists
$\mZ\in\R^{N\times d_h}$ with $X=\mZ\mE_h^\top$ (the rows of $\mZ$ are the coordinate rows).
Define the reduced parameters
\[
\widetilde{\vv}^{(h)}:=\mE_h^\top\vv^{(h)}\in\R^{d_h},
\qquad
\widetilde{\mW}^{(h)}:=\mE_h^\top\mW^{(h)}\mE_h\in\R^{d_h\times d_h}.
\]
Using $\mW^{(h)}=\mP_h\mB_h\mP_h$ and $\mP_h=\mE_h\mE_h^\top$, we have for all coordinate
rows $\vz_i,\vz_N\in\R^{d_h}$,
\[
\vx_i^\top\mW^{(h)}\vx_N
=(\mE_h\vz_i)^\top\mW^{(h)}(\mE_h\vz_N)
=\vz_i^\top\bigl(\mE_h^\top\mW^{(h)}\mE_h\bigr)\vz_N
=\vz_i^\top\widetilde{\mW}^{(h)}\vz_N,
\]
and $X\vv^{(h)}=\mZ\,\mE_h^\top\vv^{(h)}=\mZ\,\widetilde{\vv}^{(h)}$. Hence for such $X$,
\[
f^{H}_{W,v}(X)
=\balpha\!\bigl(\mZ,\widetilde{\mW}^{(h)}\bigr)^{\top}\bigl(\mZ\,\widetilde{\vv}^{(h)}\bigr),
\]
which is precisely the single-head attention map in dimension $d_h$ with parameters
$(\widetilde{\mW}^{(h)},\widetilde{\vv}^{(h)})$ evaluated on input $\mZ$.

\noindent\emph{Recovery of $\widetilde{\vv}^{(h)}$.}
Query the oracle with one-row inputs of the form $X=[(\mE_h\ve_i)^\top]$, $i=1,\ldots,d_h$,
where $\{\ve_i\}$ is the standard basis of $\R^{d_h}$. Since $N=1$ makes the softmax weight $1$,
each response is
\(
y=\langle \mE_h\ve_i,\vv^{(h)}\rangle
=\langle \ve_i,\mE_h^\top\vv^{(h)}\rangle
=\widetilde{\vv}^{(h)}_i.
\)
Thus $\widetilde{\vv}^{(h)}$ is recovered from $d_h$ queries, and then $\vv^{(h)}=\mE_h\widetilde{\vv}^{(h)}$.

\noindent\emph{Recovery of $\widetilde{\mW}^{(h)}$.}
Fix $j\in[d_h]$ and set $\widetilde{\bw}^{(h)}_j:=\widetilde{\mW}^{(h)}\ve_j$ (the $j$-th column).
For any probe $\widetilde{\vu}\in\R^{d_h}$, query the two-row input
\[
X=\begin{bmatrix} (\mE_h\widetilde{\vu})^\top \\ (\mE_h\ve_j)^\top \end{bmatrix}
=\begin{bmatrix} \vx_1^\top \\ \vx_2^\top \end{bmatrix},
\qquad \vx_1,\vx_2\in U_h.
\]
As above, only head $h$ contributes and in the reduced coordinates the observed label is
\[
y
=\alpha(\widetilde{\vu};j)\,(\widetilde{\vu}^\top\widetilde{\vv}^{(h)})
+\bigl(1-\alpha(\widetilde{\vu};j)\bigr)\,\widetilde{\vv}^{(h)}_j,
\qquad
\alpha(\widetilde{\vu};j)
=\sigma\!\bigl( (\widetilde{\vu}-\ve_j)^\top \widetilde{\bw}^{(h)}_j \bigr).
\]
Whenever $\widetilde{\vu}^\top\widetilde{\vv}^{(h)}\neq \widetilde{\vv}^{(h)}_j$ we can read off
\[
\alpha(\widetilde{\vu};j)
=\frac{\,y-\widetilde{\vv}^{(h)}_j\,}{\,\widetilde{\vu}^\top\widetilde{\vv}^{(h)}-\widetilde{\vv}^{(h)}_j\,}
\in(0,1),
\qquad
\sigma^{-1}\!\bigl(\alpha(\widetilde{\vu};j)\bigr)
=(\widetilde{\vu}-\ve_j)^\top \widetilde{\bw}^{(h)}_j.
\]
Thus each probe $\widetilde{\vu}$ yields a linear equation in the unknown
$\widetilde{\bw}^{(h)}_j$. Choose $d_h$ probes
$\widetilde{\vu}_1,\ldots,\widetilde{\vu}_{d_h}\in\R^{d_h}$ so that
(i) $\widetilde{\vu}_\ell^\top\widetilde{\vv}^{(h)}\neq \widetilde{\vv}^{(h)}_j$ for all $\ell$, and
(ii) the differences $\widetilde{\vz}_\ell:=\widetilde{\vu}_\ell-\ve_j$ are linearly independent.
(For instance, use the deterministic construction from the single-head case inside $\R^{d_h}$:
pick any $p$ with $\widetilde{\vv}^{(h)}_p\neq 0$ and set
$\widetilde{\vu}_\ell=\ve_j+\ve_\ell$ if $\widetilde{\vv}^{(h)}_\ell\neq 0$, else
$\widetilde{\vu}_\ell=\ve_j+\ve_\ell+\ve_p$.)
Stacking the equations gives
\[
\underbrace{\begin{bmatrix} (\widetilde{\vz}_1)^\top \\ \vdots \\ (\widetilde{\vz}_{d_h})^\top \end{bmatrix}}_{\widetilde{\mZ}\in\R^{d_h\times d_h}}
\widetilde{\bw}^{(h)}_j
=
\underbrace{\begin{bmatrix}
\sigma^{-1}\!\bigl(\alpha(\widetilde{\vu}_1;j)\bigr)\\ \vdots\\
\sigma^{-1}\!\bigl(\alpha(\widetilde{\vu}_{d_h};j)\bigr)
\end{bmatrix}}_{\widetilde{\bt}\in\R^{d_h}},
\qquad
\alpha(\widetilde{\vu}_\ell;j)
=\frac{\,y_\ell-\widetilde{\vv}^{(h)}_j\,}{\,\widetilde{\vu}_\ell^\top\widetilde{\vv}^{(h)}-\widetilde{\vv}^{(h)}_j\,}.
\]
By construction $\widetilde{\mZ}$ is invertible, so
$\widetilde{\bw}^{(h)}_j=\widetilde{\mZ}^{-1}\widetilde{\bt}$ is uniquely determined.
Repeating over $j=1,\ldots,d_h$ recovers $\widetilde{\mW}^{(h)}$; lifting back to $\R^d$ gives
\[
\mW^{(h)}=\mE_h\,\widetilde{\mW}^{(h)}\,\mE_h^\top,
\qquad
\vv^{(h)}=\mE_h\,\widetilde{\vv}^{(h)}.
\]

Because the heads decouple on subspace-restricted queries, the above procedure recovers
$(\mW^{(h)},\vv^{(h)})$ for each $h$ independently. If two parameter tuples in the class induce
the same function on all $X$, then restricting to each $U_h$ and invoking the single-head
identifiability gives equality of the corresponding $(\mW^{(h)},\vv^{(h)})$ in that subspace.
If the labels of the subspaces are not fixed, the parameters are recovered uniquely up to a
permutation of the heads. This proves the proposition.
\end{proof}

\textbf{Conjecture on Learning with Queries.} The above result indicates that under the orthogonal subspace assumption, if the learner knows the basis of each subspace, then it can essentially use the single-head approach to recover the parameters up to some permutation. The key challenge then lies in identifying the basis of each subspace. 

We conjecture that the multi-head attention model could be learnable with access to value queries or potentially Hessian queries. We describe a potential direction to proving it in this section and discuss current obstacles. 

 One can make use of value queries to obtain the Hessian of a function (approximately) via standard finite differencing methods and obtain the Hessian for any input up to some additive tolerance. For the rest of the discussion, we assume that the learner has access to the true exact Hessian. 
 
 Take inputs of the form $X  =  \begin{bmatrix}  (\vz + \be_j)^\top \\ \be_j^\top \end{bmatrix}$ where $\vz$ is an input probe vector. Analogous to the single-head case, the multi-head attention model can be rewritten as,

\[
    f(X) = \sum_{h=1}^{H} \bigl(\sigma(\vz^{\top}\bw^{(h)}_j)(\vz^{\top}\vvh) + \be_j^{\top}\vv^{(h)} \bigr).
\]

However, unlike the single-head case, we cannot take $\sigma^{-1}$ and solve it directly. Rewrite it and define the function $F(\vz) : \R^d \to \R$,

\[
    F(\vz) = f(X) - \be_j^{\top}\sum_{h=1}^{H} \vvh = \sum_{h=1}^{H} \sigma(\vz^{\top}\bw^{(h)}_j)(\vz^{\top}\vvh).
\]

The Hessian $\nabla^2_\vz F(\vz)$ has the following form,

\[
    \nabla^2_\vz F(\vz) =
    \sum_{h= 1}^{H} \left(
    (\vz^\top \vvh) \sigma''(\vz^\top\bw_{j}^{(h)}) \bw_{j}^{(h)} (\bw_{j}^{(h)})^\top + \sigma'(\vz^\top \bw_{j}^{(h)}) (\bw_{j}^{(h)}(\vvh)^{\top} + \vvh(\bw_{j}^{(h)})^{\top})
    \right ).
\]

We can write it in matrix form in the following way,

\[
U^{(j)}=\big[\,\bw_{j}^{(1)}~\cdots~\bw_{j}^{(H)}\,\big]\in\mathbb{R}^{d\times H},\quad
V=\big[\,\vv^{(1)}~\cdots~\vv^{(H)}\,\big]\in\mathbb{R}^{d\times H},\quad
s=(U^{(j)})^\top \vz\in\mathbb{R}^{H},\quad
t=V^\top \vz\in\mathbb{R}^{H}.
\]
Write \(\sigma,\sigma',\sigma''\) elementwise on vectors. Define
\[
D_1=\operatorname{Diag}\!\big(\sigma'(s)\big),\qquad
D_2=\operatorname{Diag}\!\big(\sigma''(s)\odot t\big),
\]
where \(\odot\) is the Hadamard or elementwise product.

\begin{align*}
\nabla_{\vz}^2 F(\vz) \;&=\; U^{(j)} D_2 (U^{(j)})^\top \;+\; U^{(j)} D_1 V^\top \;+\; \big(U^{(j)} D_1 V^\top\big)^\top.
\end{align*}

At the zero vector, we get the Hessian matrix with the following form,

\[
 4\nabla_{\vz}^2 F(\vzero) = U^{(j)}V^{\top} + V(U^{(j)})^{\top}.
\]

One approach is to try and leverage the structure of the Hessian to identify the basis of each subspace. If we have access to the Hessian, the task reduces to the following problem. For integers $d$ and $d_h < d/2$ such that $H = d/d_h$ is a positive integer, there are $H$ subspaces $S_1, \ldots, S_H$ such that $S_i \perp S_j$ for all $i \neq j$ (Assumption~\ref{asmp:ortho-subspace}). So we have $H$ unknown spaces which are orthogonal to each other. There are unknown matrices $\mW^{(1)}, \ldots, \mW^{(H)} \in \R^{d \times d}$ and vectors $\vv_1, \ldots, \vv_H \in \R^d$ with the property that the columns of $\mW^{(h)}$ and vector $\vv_h$ are in the subspace $S_{h}$. Thus, the columns of any matrix $\mW^{(i)}$ is orthogonal to the columns of $\mW^{(j)}$ and $\vv_j$ for all $i \neq j$. 

For $j \in [d]$, define the matrix $U^{(j)}= [\mW^{(1)}_j, \ldots, \mW^{(H)}_j]$ and $V = [\vv_1, \ldots, \vv_H]$.
We have access to $d$ matrices $A_1, \ldots A_d$ where for $j \in [d]$
\[
    A^{(j)} = U^{(j)}V^{\top} + V(U^{(j)})^{\top} = \sum_{h=1}^{H} \bigl ( \vu^{(j)}_h \vv_{h}^{\top} + \vv_h (\vu_h^{(j)})^{\top} \bigr )
\]

Given matrices $\mA^{(1)}, \ldots, \mA^{(d)}$, we want to identify the basis $\tbe_1, \ldots, \tbe_d$ of the subspaces $S_1, \ldots, S_H$ where $S_h$ has the orthonormal basis $\tbe_{(h-1)d_h+1}, \ldots, \tbe_{hd_h}$. Note that each subspace can have many possible basis vectors, but we need to identify any one of them.

A couple of useful properties about the problem are the following. First, note that for any vector $\vx \in S_i$, the vector $\mA^{(j)} \vx \in S_i$ and hence, each matrix $\mA^{(j)}$ for $j \in [d]$ is invariant in each of the subspaces $S_1, \ldots, S_H$. Secondly, if we use a projection matrix $\mE$ using the orthogonal basis of the subspaces $\tbe_1, \ldots, \tbe_d$ instead of the standard basis vectors, then matrices $\mE^\top \mA^{(1)}\mE, \ldots, \mE^\top \mA^{(d)}\mE $ will all be simultaneously block-diagonal with blocks of sizes at most $d_h \times d_h$. Thus, a potential approach to extract the basis from the Hessian could be to use finest simultaneous block-diagonalisation algorithms~\citep{murota2007numerical, murota2010numerical}. 
Given matrices $\mA^{(1)}, \ldots, \mA^{(d)}$, a finest simultaneous block-diagonalisation (FSBD) algorithm produces a projection matrix $\mP$ such that,

\[
        \mP^{\top}\mA^{(j)} \mP \text{ is block-diagonal for all}\  j \in [d].
\]

There are a few reasons why this does not establish any provable guarantees yet. Firstly, we can only hope to obtain the Hessian approximately via value queries. It is unclear whether block-diagonalisation can be guaranteed in that case and this approach could fail. Noise-robust versions of such algorithms~\citep{maehara2011algorithm} require some separations in the eigenvalues of the target matrices, which would need some additional assumptions. Secondly, even in the case where we have access to exact Hessians, some degeneracy conditions could be needed since one could show that for blocks of size 1, there could be multiple sets of matrices $\mA^{(1)}, \ldots, \mA^{(d)}$ which can be block-diagonalized using the same projection matrix when the eigenvalues collide. We believe that with mild degeneracy conditions and making use of probes other than zero vector $\nabla_{\vz}^2 F(\vzero)$, it could be possible to identify the basis from Hessian queries.




\end{document}